\documentclass{article}

% if you need to pass options to natbib, use, e.g.:
\PassOptionsToPackage{numbers, sort, compress}{natbib}

\usepackage[preprint]{gsubsampling}
 
\usepackage[utf8]{inputenc} % allow utf-8 input
\usepackage[T1]{fontenc}    % use 8-bit T1 fonts
\usepackage{url}            % simple URL typesetting
\usepackage{booktabs}       % professional-quality tables
\usepackage{amsfonts}       % blackboard math symbols
\usepackage{nicefrac}       % compact symbols for 1/2, etc.
\usepackage{microtype}      % microtypography

% custom package
\usepackage{color}
\usepackage{xcolor}         % colors
\usepackage[colorlinks]{hyperref}
% \hypersetup{colorlinks,citecolor=BrickRed,linkcolor=red}
% Set up color scheme
\def\tmp#1#2#3{%
  \definecolor{Hy#1color}{#2}{#3}%
  \hypersetup{#1color=Hy#1color}}
\tmp{link}{HTML}{800006}
\tmp{cite}{HTML}{2E7E2A}
\tmp{file}{HTML}{131877}
\tmp{url} {HTML}{8A0087}
\tmp{menu}{HTML}{727500}
\tmp{run} {HTML}{137776}
\def\tmp#1#2{%
  \colorlet{Hy#1bordercolor}{Hy#1color#2}%
  \hypersetup{#1bordercolor=Hy#1bordercolor}}
\tmp{link}{!60!white}
\tmp{cite}{!60!white}
\tmp{file}{!60!white}
\tmp{url} {!60!white}
\tmp{menu}{!60!white}
\tmp{run} {!60!white}

\usepackage{amsthm}
\usepackage{amsmath}
\usepackage{amssymb}
\usepackage{placeins}
\usepackage{threeparttable}
\usepackage{bm}
\usepackage{soul}
\usepackage[normalem]{ulem}
\usepackage{pifont}% http://ctan.org/pkg/pifont
\usepackage{cleveref}
\usepackage{paralist}
\usepackage{makecell}
\usepackage{caption}
\usepackage{multirow}
\usepackage{wrapfig}
\usepackage{caption}
\usepackage{subcaption}
\usepackage{thmtools, thm-restate}
\usepackage{bbold}
\usepackage{dsfont}

\usepackage[inline]{enumitem}
\usepackage{mathtools}

\theoremstyle{definition}

\title{Group Equivariant Subsampling}

\author{%
  \bf{Jin Xu}$^{1}$\thanks{Corresponding author: \texttt{<jin.xu@stats.ox.ac.uk>}} \hspace{0.6cm} \bf{Hyunjik Kim}$^{2}$ \hspace{0.6cm} \bf{Tom Rainforth}$^{1}$ \hspace{0.6cm}
  \bf{Yee Whye Teh}$^{1,2}$ \\ \\
  $^1$ Department of Statistics, University of Oxford, UK.\\
  $^2$ DeepMind, UK. 
}

\begin{document}

\maketitle

\begin{abstract}

Subsampling is used in convolutional neural networks (CNNs) in the form of pooling or strided convolutions, to reduce the spatial dimensions of feature maps and to allow the receptive fields to grow exponentially with depth.
However, it is known that such subsampling operations are not translation equivariant, unlike convolutions that \emph{are} translation equivariant. 
Here, we first introduce translation equivariant subsampling/upsampling layers that can be used to construct exact translation equivariant CNNs.
We then generalise these layers beyond translations to general groups, thus proposing \textit{group equivariant subsampling/upsampling}.
We use these layers to construct group equivariant autoencoders (GAEs) that allow us to learn low-dimensional equivariant representations.
We empirically verify on images that the representations are indeed equivariant to input translations and rotations, and thus generalise well to unseen positions and orientations. We further use GAEs in models that learn object-centric representations on multi-object datasets, and show improved data efficiency and decomposition compared to non-equivariant baselines.
\end{abstract}

\section{Introduction} \label{sec:introduction}

% CNNs
\looseness=-1
Convolutional Neural Networks (CNNs) are known to be more data efficient and show better generalisation on perceptual tasks than fully-connected networks, due to translation equivariance encoded in the convolutions: when the input image/feature map is translated, the output feature map also translates by the same amount. 
In typical CNNs, convolutions are used in conjunction with subsampling operations, in the form of pooling or strided convolutions, to reduce the spatial dimensions of feature maps and to allow receptive field to grow exponentially with depth. 
Subsampling/upsampling operations are especially necessary for convolutional autoencoders (ConvAEs) \citep{Masci2011StackedCA} because they allow efficient dimensionality reduction.
However, it is known that subsampling operations implicit in strided convolutions or pooling layers are \textit{not} translation equivariant \citep{zhang2019making}, hence CNNs that use these components are also not translation invariant.
Therefore such CNNs and ConvAEs are not guaranteed to generalise to arbitrarily translated inputs despite their convolutional layers being translation equivariant.

Previous work, such as \cite{zhang2019making,chaman2020truly}, has investigated how to enforce translation invariance on CNNs, but does not study equivariance with respect to symmetries beyond translations, such as rotations or reflections.
In this work, we first describe subsampling/upsampling operations that preserve exact translation equivariance. 
The main idea is to sample feature maps on an input-dependent grid rather than a fixed one as in pooling or strided convolutions, and the grid is chosen according to a \textit{sampling index} computed from the inputs (see \Cref{fig:subsampling}).
Simply replacing the subsampling/upsampling in standard CNNs with such translation equivariant subsampling/upsampling operations leads to CNNs and transposed CNNs that can map between spatial inputs and low-dimensional representations in a translation equivariant manner.

\looseness=-1
We further generalise the proposed subsampling/upsampling operations from translations to arbitrary groups, proposing \textit{group equivariant subsampling/upsampling}.
In particular we identify subsampling as mapping features on groups $G$ to features on subgroups $K$ (vice versa for upsampling), and identify the sampling index as a coset in the quotient space $G/K$. See \Cref{sec:app_preliminaries} for a primer on group theory that is needed to describe this generalisation.
We note that group equivariant subsampling is different to \textit{coset pooling} introduced in \cite{cohen2016group}, which instead gives features on the quotient space $G/K$, and discuss differences in detail in \Cref{sec:related_work}.
Similar to the translation equivariant subsampling/upsampling, group equivariant subsampling/upsampling can be used with group equivariant convolutions to produce group equivariant CNNs.
Using such group equvariant CNNs we can construct group equivariant autoencoders (GAEs) that separate representations into an invariant part and an equivariant part.

\looseness=-1
While there is a growing body of literature on group equivariant CNNs (G-CNNs) \citep{cohen2016group,cohen2016steerable,worrall2017harmonic,weiler2018learning,weiler20183d,thomas2018tensor,weiler2019general}, such equivariant convolutions usually preserve the spatial dimensions of the inputs (or lift them to even higher dimensions) until the final invariant pooling layer. There is a lack of exploration on how to reduce the spatial dimensions of such feature maps while preserving exact equivariance, to produce low-dimensional equivariant representations.
This work attempts to fill in this gap.
Such low-dimensional equivariant representations can be employed in representation learning methods, allowing various advantages such as interpretability, out-of-distribution generalisation, and better sample complexity. 
When using such learned representations in downstream tasks such as abstract reasoning, reinforcement learning, video modelling, scene understanding, it is especially important for representations to be equivariant rather than invariant in these tasks, because transformations and how they act on feature spaces are critical information, rather than nuisance as in image classification problems.

\begin{figure}[t]
  \centering
    \includegraphics[width=0.99\linewidth]{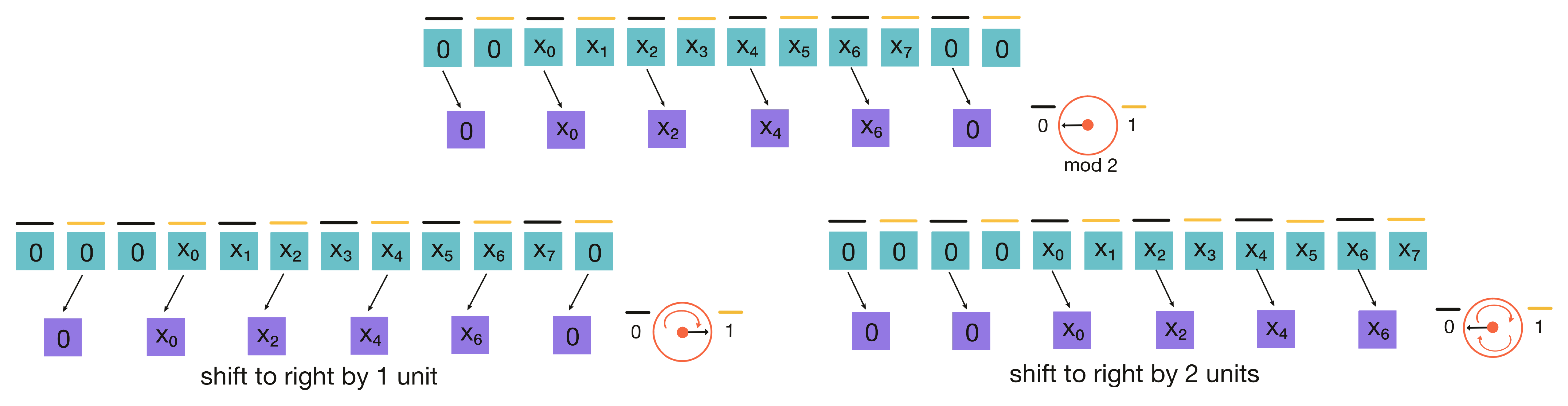}
    \caption{Equivariant subsampling on 1D feature maps with a scale factor $c=2$. The input feature map has length $8$, and initially we sample from odd positions determined by \Cref{eq:argmax} (top). When the original feature map is shifted to the right by $1$ unit (bottom left), the sampling index becomes $1$, so we instead sample from even positions. When the feature map is shifted to the right by $2$ units (bottom right), we again sample from odd positions, but the outputs have been shifted to the right by $1$ unit correspondingly.}
    \label{fig:subsampling}
\end{figure}

% Contributions
In summary, we make the following contributions:
\begin{enumerate*}[label=(\roman*)]
    \item We propose subsampling/upsampling operations that preserve translational equivariance.
    \item We generalise the proposed subsampling/upsampling operations to arbitrary symmetry groups.
    \item We use equivariant subsampling/upsampling operations to construct GAEs that gives low-dimensional equivariant representations.
    \item We empirically show that representations learned by GAEs enjoys many advantages such as interpretability, out-of-distribution generalisation, and better sample complexity.
\end{enumerate*}

\section{Equivariant Subsampling and Upsampling} \label{sec:equivariant_subsampling}

\subsection{Translation Equivariant Subsampling for CNNs} \label{subsec:translation_equivariant_subsampling}

In this section we describe the proposed translation equivariant subsampling scheme for feature maps in standard CNNs. Later in \Cref{subsec:group_equivariant_subsampling}, we describe how this can be generalised to group equivariant subsampling for feature maps on arbitrary groups.

% feature maps in CNNs
\paragraph{Standard subsampling} Feature maps in CNNs can be seen as functions defined on the integer grid, e.g. $\mathbb{Z}$ for 1D feature maps, and $\mathbb{Z}^2$ for 2D. Hence we represent feature maps as $f: \mathbb{Z} \rightarrow R^d$, where $d$ is the number of feature map channels. For simplicity, we start with 1D and move on to the 2D case.
% strides and pooling
Typically, subsampling in CNNs is implemented as either strided convolution or (max) pooling, and they can be decomposed as
\begin{align*}
    \operatorname{\textsc{conv}}_k^c &= \operatorname{\textsc{subsampling}}^c \circ \operatorname{\textsc{conv}}_k^1 \nonumber \\
    \operatorname{\textsc{maxpool}}_k^c &= \operatorname{\textsc{subsampling}}^c \circ \operatorname{\textsc{maxpool}}_k^1
\end{align*}
where subscripts denote kernel sizes and superscripts indicate strides. $c \in \mathbb{N}$ is the scale factor for $\operatorname{\textsc{subsampling}}$, and this operation simply restricts the input domain of the feature map from $\mathbb{Z}$ to $c\mathbb{Z}$, without changing the corresponding function values.

\paragraph{Translation equivariant subsampling} In our equivariant subsampling scheme, we instead restrict the input domain to $c\mathbb{Z}+i$, the integers $\equiv i \mod c$, where $i$ is a sampling index determined by the input feature map.
The key idea is to choose $i$ such that it is shifts by $t (\bmod c)$ when the input is translated by $t$,
to ensure that the same features are subsampled upon translation.
Let $i$ be given by the mapping $\Phi_c:\mathcal{I}_{\mathbb{Z}}\rightarrow \mathbb{Z}/c\mathbb{Z}$. $\mathcal{I}_{\mathbb{Z}}$ denotes the space of vector functions on $\mathbb{Z}$ and $\mathbb{Z}/c\mathbb{Z}$ is the space of remainders upon division by $c$.
\begin{align}
    i = \Phi_c(f) = \bmod(\arg\max_{x\in \mathbb{Z}} \|f(x)\|_1, c) \label{eq:argmax}
\end{align}
where $\|\cdot \|_1$ denotes $L^1$-norm (other choices of norm are equally valid). 
Other choices for $\Phi_c$ are equally valid as long as they satisfy translation equivariance, ensuring that the same features are subsampled upon translation of the input:
\begin{align}
    \Phi_c(f(\cdot-t)) = \bmod(\Phi_c(f)+t, c).
\end{align}
Note that this holds for \Cref{eq:argmax} provided the argmax is unique, which we assume for now (see \Cref{sec:app_phi} for a discussion of the non-unique case). 
We can decompose the subsampled feature map defined on $c\mathbb{Z}+i$ into its values and the offset index $i$, expressing it as $[f_b, i]\in (\mathcal{I}_{c\mathbb{Z}}, \mathbb{Z}/c\mathbb{Z})$, where $f_b$ is the translated output feature map such that $f_b(cx)=f(cx+i)$ for $x\in \mathbb{Z}$.

\looseness=-1
The subsampling operation described above, which maps from $\mathcal{I}_{\mathbb{Z}}$ to $(\mathcal{I}_{c\mathbb{Z}}, \mathbb{Z}/c\mathbb{Z})$ is translation equivariant: when the feature map $f$ is translated to the right by $t\in\mathbb{Z}$, one can verify that $f_b$ will be translated to the right by $c \lfloor \frac{i+t}{c} \rfloor$, and the sampling index for the translated inputs would become $\bmod(i+t,c)$. We provide an illustration for $c=2$ in \Cref{fig:subsampling}, and describe formal statements and proofs later for the general cases in \Cref{subsec:group_equivariant_subsampling}.

% Summarise multi-layer case
\paragraph{Multi-layer case} For the subsequent layers, the feature map $f_b$ is fed into the next convolution, and the sampling index $i$ is appended to a list of outputs. When the above translation equivariant subsampling scheme is interleaved with convolutions in this way, we obtain an exactly translation equivariant CNN, where each subsampling layer with scale factor $c_k$ produces a sampling index $i_k \in \mathbb{Z}/c_k \mathbb{Z}$. Hence the equivariant representation output by the CNN with $L$ subsampling layers is a final feature map $f_L$ and a $L$-tuple of sampling indices $(i_1, \ldots, i_L)$. This tuple can in fact be expressed equivalently as a single integer by treating the tuple as mixed radix notation and converting to decimal notation. We provide details of this multi-layer case in \Cref{sec:app_multilayer}, including a rigorous formulation and its equivariance properties.

\paragraph{Translation equivariant upsampling} As a counterpart to subsampling, upsampling operations increase the spatial dimensions of feature maps. We propose an equivariant upsampling operation that takes in a feature map $f \in \mathcal{I}_{c\mathbb{Z}}$ and a sampling index $i \in \mathbb{Z}/c\mathbb{Z}$, and outputs a feature map $f_u \in \mathcal{I}_{\mathbb{Z}}$, where we set $f_u(cx+i)=f(cx)$ and $\bm{0}$ everywhere else. This works well enough in practice, although in conventional upsampling the output feature map is often a smooth interpolation of the input feature map. To achieve this with equivariant upsampling, we can additionally apply average pooling with stride $1$ and kernel size $> 1$. 

% 2D case
\paragraph{2D Translation equivariant subsampling} When feature maps are $2D$, they can be represented as functions on $\mathbb{Z}^2$. The sampling index becomes a $2$-element tuple given by:
\begin{align*}
    (x^{\ast},y^{\ast}) &= {\arg\max}_{(x,y)\in \mathbb{Z}^2} \|f(x)\|_1 \nonumber \\ 
    (i, j) &= (\bmod(x^{\ast}, c), \bmod(y^{\ast}, c))
\end{align*}
and we subsample feature maps by restricting the input domain to $c\mathbb{Z}^2+(i,j)$. The multi-layer construction and upsampling is analogous to the 1D-case.

\subsection{Group Equivariant Subsampling and Upsampling} \label{subsec:group_equivariant_subsampling}
In this section, we propose group equivariant subsampling by starting off with the 1D-translation case in \Cref{subsec:translation_equivariant_subsampling}, and provide intuition for how each component of this special case generalises to arbitrary discrete groups $G$. We then proceed to mathematically formulate group equivariant subsampling, and prove that it is indeed $G$-equivariant.

% feature maps in G-CNNs
\paragraph{Feature maps on groups} First recall that the feature maps for the 1D-translation case were defined as functions on $\mathbb{Z}$, or $f \in \mathcal{I}_{\mathbb{Z}}$ for short. To extend this to the general case, we consider feature maps $f$ as functions on a group $G$, i.e. $ f \in \mathcal{I}_G=\{f:G\to V\}$\footnote{This is not to be confused with the space of Mackey functions in, e.g., \cite{cohen2019general}, and rather it is the space of unconstrained functions on $G$.} where $V$ is a vector space, as is done in e.g. group equivariant CNNs (G-CNNs) \citep{cohen2016group}.
Note that translating feature maps $f$ on $\mathbb{Z}$ by displacement $u$ is effectively defining a new feature map $f'(\cdot) = f(\cdot - u)$. In the general case, we say that the group action on the feature space is given by
\begin{align} \label{eq:action_pi}
    [\pi(u)f](g) = f(u^{-1} g)
\end{align}
where $\pi$ is a group representation describing how $u\in G$ acts on the feature space.

% Recall Section 2.1
\paragraph{Recap: translation equivariant subsampling} Recall that standard subsampling that occurs in pooling or strided convolutions for 1D translations amounts to restricting the domain of the feature map from $\mathbb{Z}$ to $c\mathbb{Z}$, whereas equivariant subsampling also produces a sampling index $i \in \mathbb{Z}/c\mathbb{Z}$, an integer mod $c$, and that this is equivalent to restricting the input domain to $c\mathbb{Z} + i$. $i$ is given by the translation equivariant mapping $\Phi_c: \mathcal{I}_{\mathbb{Z}} \rightarrow \mathbb{Z}/ c \mathbb{Z}$. We can translate the input domain back to $c\mathbb{Z}$, and represent the output of subsampling as $[f_b, i]\in (\mathcal{I}_{c\mathbb{Z}}, \mathbb{Z}/c\mathbb{Z})$, where $f_b$ is the translated output feature map and $f_b(cx)=f(cx+i)$ for $x\in \mathbb{Z}$. 

\paragraph{Group equivariant subsampling} Similarly in the general case, for a feature map $f\in \mathcal{I}_G$, standard subsampling can be seen as restricting the domain from the group $G$ to a subgroup $K$, whereas equivariant subsampling additionally produces a sampling index $pK \in G/K$, where the quotient space $G/K = \{gK: g \in G\}$ is the set of (left) \textit{cosets} of $K$ in $G$. Note that we have rewritten $i$ as $p$ to distinguish between the 1D translation case and the general group case. This is equivalent to restricting the $f$ to the coset $pK$.
The choice of the coset $pK$ is given by equivariant map $\Phi: \mathcal{I}_G \rightarrow G/K$ (the action of $G$ on $G/K$ is given by $u(gK)=(ug)K$ for $u,g\in G$), such that $pK = \Phi(f)$. This restriction of $f$ to $pK$ can also be thought of as having an output feature map $f_b$ on $K$ and choosing a coset representative element $\bar{p} \in pK$, such that $f_b(k) = f(\bar{p}k)$. This choice of coset representative is described by a function $s:G/K\to G$, such that $\bar{p}=s(pK)$. The function $s$ is called a section and should satisfy $s(pK)K=pK$.

Now let us formulate subsampling and upsampling operations $S_b{\downarrow}_K^G$ and $S_u{\uparrow}_K^G$ mathematically and prove its $G$-equivariance. Let $\mathcal{I}_K=\{f:K\to V'\}$ be the space of feature map on $K$. $S_b{\downarrow}_K^G$ takes in a feature map $f\in \mathcal{I}_G$ and produces a feature map $f_b\in \mathcal{I}_K$ and a coset in $G/K$. In reverse, the upsampling operation $S_u{\uparrow}_K^G$ takes in a feature map in $\mathcal{I}_K$, a coset in $G/K$, and produces a feature map in $\mathcal{I}_G$.  We use a section $s: G/K \rightarrow G$ to represent a coset with a representative element in $G$, and point out that equivariance holds for any choice of $s$.

Formally, given an equivariant map $\Phi:\mathcal{I}_G\rightarrow G/K$ (we will discuss how to construct such a map in \Cref{subsec:phi}), and a fixed section $s:G/K\rightarrow G$ such that $\bar{p} = s(p K)$, the subsampling operation $S_b{\downarrow}_K^G: \mathcal{I}_G \rightarrow \mathcal{I}_K \times G/K$ is defined as:
\begin{align} \label{eq:subsampling}
    p K &= \Phi(f), \hspace{3mm}
    f_b(k) = f(\bar{p} k) \;\text{for}\; k\in K \nonumber \\
    [f_b,  p K] &= S_b{\downarrow}_K^G(f;\Phi),
\end{align}
while the upsampling operation $S_u{\uparrow}_K^G: \mathcal{I}_K \times G/K \rightarrow \mathcal{I}_G$ is defined as:
\begin{align} \label{eq:upsampling}
    f_u(g) &= f(\bar{p}^{-1} g) \text{\;if\;} g\in K \text{\;else\;} \bm{0} \nonumber \\
    f_u &= S_u{\uparrow}_K^G(f, p K).
\end{align}

To make the output of the upsampling dense rather than sparse, one can apply arbitrary equivariant smoothing functions such as average pooling with stride $1$ and kernel size $> 1$, to compensate for the fact that we extend with $\bm{0}$s rather than values close to their neighbours. In practice, we observe that upsampling without any smoothing function works well enough.

The statement on the equivariance of $S_b{\downarrow}_K^G$ and $S_u{\uparrow}_K^G$ requires we specify the action of $G$ on the space $\mathcal{I}_K \times G/K$, which we denote as $\pi'$. For any $u\in G$,
\begin{align} \label{eq:group_action}
    p' K &= up K, \hspace{3mm}
    f'_b =  \pi(\bar{p}'^{-1}u\bar{p}) f_b \nonumber \\
    [f'_b,\;p' K] &= \pi'(u) [f_b,\;p K] 
\end{align}

\begin{restatable}{lemma}{gaction}
\label{thm:gaction}
$\pi'$ defines a valid group action of $G$ on the space $\mathcal{I}_K \times G/K$.
\end{restatable}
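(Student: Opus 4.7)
The plan is to verify the two group-action axioms---identity and composition---for $\pi'$, after first confirming that the formula in \Cref{eq:group_action} actually lands in the claimed codomain $\mathcal{I}_K \times G/K$. The coset part $p'K = upK$ is manifestly in $G/K$ by the standard left-multiplication action, so the only question is whether $\pi(\bar{p}'^{-1} u \bar{p})$ acts sensibly on $\mathcal{I}_K$. This requires $\bar{p}'^{-1} u \bar{p} \in K$, which I would derive as follows: by definition $\bar{p}' = s(upK)$, so $\bar{p}'K = upK = u\bar{p}K$, and the equality of cosets $\bar{p}'K = u\bar{p}K$ is precisely the statement $\bar{p}'^{-1} u \bar{p} \in K$. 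Thus $\pi$ restricted to $K$ (acting on $\mathcal{I}_K$ via $[\pi(k) f_b](k') = f_b(k^{-1} k')$ as in \Cref{eq:action_pi}) gives a well-defined endomorphism, and $\pi'(u)$ maps $\mathcal{I}_K \times G/K$ into itself.

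Identity is a one-liner: when $u = e$, $p'K = pK$ so $\bar{p}' = \bar{p}$, hence $f'_b = \pi(\bar{p}^{-1} \bar{p}) f_b = \pi(e) f_b = f_b$. For composition I compute both $\pi'(u_1 u_2)[f_b, pK]$ and $\pi'(u_1)\pi'(u_2)[f_b, pK]$ and match them. Writing $\bar{p}_2 = s(u_2 pK)$ and $\bar{p}_{12} = s(u_1 u_2 pK)$, applying $\pi'(u_2)$ first yields $[\pi(\bar{p}_2^{-1} u_2 \bar{p}) f_b,\; u_2 pK]$; then $\pi'(u_1)$ gives
\[ \bigl[\pi(\bar{p}_{12}^{-1} u_1 \bar{p}_2)\,\pi(\bar{p}_2^{-1} u_2 \bar{p})\, f_b,\; u_1 u_2 pK\bigr]. \]
Because $\pi$ is a genuine representation, the two $\pi$-factors collapse and the internal $\bar{p}_2 \bar{p}_2^{-1}$ telescopes to $e$, leaving $\pi(\bar{p}_{12}^{-1} u_1 u_2 \bar{p}) f_b$, which is exactly the feature-map component produced by $\pi'(u_1 u_2)[f_b, pK]$. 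The coset sides agree by associativity of the standard $G$-action on $G/K$.

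The main obstacle---really the only delicate point---is that the section $s$ is not in general a group homomorphism, so $\bar{p}_2 \ne u_2 \bar{p}$ and $\bar{p}_{12} \ne u_1 \bar{p}_2$; a naive attempt to push the group element through the section would fail. What rescues the argument is the conjugation-like twist $\bar{p}'^{-1} u \bar{p}$ baked into the definition of $\pi'$: the section-dependent representative $\bar{p}_2$ enters symmetrically alongside its inverse and cancels once one invokes $\pi(g_1)\pi(g_2) = \pi(g_1 g_2)$. This is the familiar cocycle-style cancellation underlying induced representations, and it simultaneously shows that $\pi'$ is a valid group action and that, on the level of actions, the identity of $\pi'$ does not depend on how the section $s$ was chosen.
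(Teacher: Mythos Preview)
Your proof is correct and follows essentially the same approach as the paper's: verify that $\bar{p}'^{-1}u\bar{p}\in K$ so the map is well-defined, then check the identity and composition axioms via the homomorphism property of $\pi$ and the telescoping cancellation of the intermediate section representative. The only cosmetic difference is that the paper establishes $\bar{p}'^{-1}u\bar{p}\in K$ by writing out explicit $k_p,k_{up}\in K$ with $p=\bar{p}k_p$ and $up=\overline{up}\,k_{up}$, whereas you argue directly from the coset equality $\bar{p}'K=u\bar{p}K$; both are equivalent.
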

We can now state the following equivariance property (See \Cref{sec:app_proofs} for a proof):
\begin{restatable}{proposition}{equivariance}
\label{thm:equivariance}
If the action of group $G$ on the space $\mathcal{I}_G$ and $\mathcal{I}_K \times G/K$ are specified by $\pi,\pi'$ (as defined in \Cref{eq:action_pi,eq:group_action}), and $\Phi:\mathcal{I}_G\rightarrow G/K$ is an equivariant map, then the operations $S_b{\downarrow}_K^G$ and $S_u{\uparrow}_K^G$ as defined in \Cref{eq:subsampling,eq:upsampling} are equivariant maps between $\mathcal{I}_G$ and $\mathcal{I}_K \times G/K$.
\end{restatable}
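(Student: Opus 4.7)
The proposition decomposes cleanly into two equivariance identities that I would tackle independently: $S_b{\downarrow}_K^G \circ \pi(u) = \pi'(u) \circ S_b{\downarrow}_K^G$ on $\mathcal{I}_G$, and $S_u{\uparrow}_K^G \circ \pi'(u) = \pi(u) \circ S_u{\uparrow}_K^G$ on $\mathcal{I}_K\times G/K$. Both are pointwise statements, so the overall strategy is to chase an arbitrary $k\in K$ (resp.\ $g\in G$) through both sides of each equation, using only the given definitions of $\pi$, $\pi'$, $S_b{\downarrow}_K^G$, $S_u{\uparrow}_K^G$, the equivariance of $\Phi$, and the section identity $\bar{p}=s(pK)$.

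For the subsampling direction, fix $f\in\mathcal{I}_G$ and $u\in G$. I would first invoke equivariance of $\Phi$ to obtain $\Phi(\pi(u)f) = u\,\Phi(f) = upK$, which matches the coset component $p'K = upK$ on the $\pi'(u)$ side and settles half of the output pair. For the feature component, I would compute the subsampled map of $\pi(u)f$ at an arbitrary $k\in K$, which unfolds to $f(u^{-1}\bar{p}'k)$, and separately unfold $[\pi(\bar{p}'^{-1}u\bar{p})f_b](k) = f_b(\bar{p}^{-1}u^{-1}\bar{p}'k) = f(\bar{p}\cdot\bar{p}^{-1}u^{-1}\bar{p}'k) = f(u^{-1}\bar{p}'k)$, so the two expressions coincide.

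For the upsampling direction, fix $[f_b,pK]$ and $u\in G$, and write $\pi'(u)[f_b,pK] = [f_b',p'K]$ with $f_b'=\pi(\bar{p}'^{-1}u\bar{p})f_b$. I would then evaluate both $S_u{\uparrow}_K^G([f_b',p'K])(g)$ and $[\pi(u)\,S_u{\uparrow}_K^G(f_b,pK)](g)$ at an arbitrary $g\in G$, showing that both are supported exactly on $\bar{p}'K = u\bar{p}K$ and that, on this common support, both reduce to $f_b(\bar{p}^{-1}u^{-1}g)$ via a short unwinding of the definitions.

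The one non-routine ingredient—and what I expect to be the only real obstacle worth flagging—is the membership $\bar{p}^{-1}u^{-1}\bar{p}'\in K$, which is needed for the expressions $f_b(\bar{p}^{-1}u^{-1}\bar{p}'k)$ and $f_b(\bar{p}^{-1}u^{-1}g)$ to even be well-defined, since $f_b$ lives on $K$. This follows from $u\bar{p}K = u(pK) = p'K = \bar{p}'K$, i.e., $\bar{p}$ and $u^{-1}\bar{p}'$ lie in the same left coset of $K$. This is precisely the algebraic identity that forces the compensating twist $\pi(\bar{p}'^{-1}u\bar{p})$ to appear in the definition of $\pi'$: the section $s:G/K\to G$ is generally not a homomorphism, and the $K$-valued correction $\bar{p}'^{-1}u\bar{p}$ absorbs exactly this failure. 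I would also rely on Lemma~\ref{thm:gaction} to ensure that $\pi'$ is a bona fide action so that ``equivariance under $\pi'$'' is a meaningful notion in the first place.
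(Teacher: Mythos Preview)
Your proposal is correct and follows essentially the same route as the paper's proof: both hinge on the equivariance of $\Phi$ for the coset component and on the fact that $\bar{p}'^{-1}u\bar{p}\in K$ (the paper writes this explicitly as $k_{up}k_p^{-1}$ from the proof of Lemma~\ref{thm:gaction}) to make the feature-map components match. The only cosmetic difference is that the paper packages things via auxiliary \emph{restrict}/\emph{extend} operators and argues at the level of compositions of $\pi(\cdot)$, whereas you unwind the same identities pointwise at $k\in K$ and $g\in G$; the underlying computation is identical.
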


\subsection{Constructing $\Phi$} \label{subsec:phi}

We use the following simple construction of the equivariant mapping $\Phi:\mathcal{I}_G \to G/K$ for subsampling/upsampling operations, although any equivariant mapping would suffice. 
For an input feature map $f\in \mathcal{I}_G$, we define
\begin{align} \label{eq:phi}
    pK = \Phi(f) \coloneqq (\arg\max_{g\in G} \|f(g)\|_1)K
\end{align}
Provided that the argmax is unique, it is easy to show that $(up)\cdot K = \Phi(\pi(u)f)$, hence $\Phi$ is equivariant. In practice one can insert arbitrary equivariant layers to $f$ before and after we take the norm $\|\cdot\|_1$ to avoid a non-unique argmax (see \Cref{sec:app_implementation}).

In theory, there could exist cases where the argmax is always non-unique. We provide a more complex construction of $\Phi$ that deals with this case in \Cref{sec:app_phi}.

\section{Application: Group Equivariant Autoencoders} \label{sec:group_equivariant_autoencoders}

Group equivariant autoencoders (GAEs) are composed of alternating G-convolutional layers and equivariant subsampling/upsampling operations for the encoder/decoder.
One important property of GAEs is that the final subsampling layer of the encoder subsamples to a feature map defined on the trivial group $\{e\}$, outputting a vector (instead of a feature map) that is \textit{invariant}.
For the 1D-translation case, suppose the input to the final subsampling layer is a feature map $f$ defined on $\mathbb{Z}$. Then the final layer produces an invariant vector $f_b(0)=f(i_L)$ where $i_L=\arg\max_{x\in \mathbb{Z}} \|f(x)\|_1$. Note that there is no scale factor $c_L$ here. Intuitively we can think of this as setting the scale factor $c_L=\infty$.
Hence the encoder of the GAE outputs a representation that is disentangled into an invariant part $z_{\text{inv}} = f_b(0)$ (the vector output by the final subsampling layer) and an equivariant part $z_{\text{eq}}=(i_1,...,i_L)$.

For the general group case, instead of specifying scale factors as in \Cref{subsec:translation_equivariant_subsampling}, we specify a sequence of nested subgroups $G=G_0 \geq G_1 \geq \dots \geq G_L=\{e\}$, where the feature map for layer $l$ is defined on subgroup $G_L$. For example, for the $p4$ group $G=\mathbb{Z}\rtimes \mathsf{C}_4$, we can use the following sequence for subsampling: $\mathbb{Z}\rtimes \mathsf{C}_4 \geq 2\mathbb{Z}\rtimes \mathsf{C}_4 \geq 4\mathbb{Z}\rtimes \mathsf{C}_4 \geq 8\mathbb{Z}\rtimes \mathsf{C}_2 \geq \{e\}$. Note that for the final two layers of this example, we are subsampling translations and rotations jointly.

We lift the input defined on the homogeneous input space to $\mathcal{I}_G$ (see Appendix \ref{sec:app_lifting} for details on homogeneous spaces and lifting), and treat $f_0 \in \mathcal{I}_G$ as inputs to the autoencoders. The group equivariant encoder $\operatorname{\textsc{enc}}$ can be described as follows:
\begin{align} \label{eq:encoder}
    [f_{l}, \;p_l G_l] &= S_b{\downarrow}_{G_{l}}^{G_{l-1}}( \operatorname{\textsc{g-cnn}}_{l-1}^E(f_{l-1}); \Phi_l) \nonumber \\
    [z_{\text{inv}}, z_{\text{eq}}] &= [f_L(e),\; (p_1G_1, p_2G_2, \ldots ,p_LG_L)]
\end{align}
where $l=1,\dots,L$ and $\operatorname{\textsc{g-cnn}}_l(\cdot)$ denotes G-convolutional layers before the $l$th subsampling layer.

The decoder $\operatorname{\textsc{dec}}$ simply goes in the opposite direction, and can be written formally as:
\begin{align} \label{eq:decoder}
    f_L &\text{ is defined on }G_L=\{e\} \text{ and } f_L(e) = z_{\text{inv}} \nonumber \\ 
    f_{l-1} &= \operatorname{\textsc{g-cnn}}_{l-1}^D(S_u{\uparrow}_{G_{l}}^{G_{l-1}}(f_l, \;p_l G_l))
\end{align}
where $l=L,\dots,1$ and $\hat{f}=f_0$ gives the final reconstruction.

Recall from \Cref{subsec:translation_equivariant_subsampling} that the tuple $(i_1,\ldots,i_L)$ can be expressed equivalently as a single integer. Similarly, the tuple $(p_1G_1, p_2G_2, \ldots ,p_LG_L)$ can be expressed as a single group element in $G$. We show in \Cref{sec:app_multilayer} that the action implicitly defined on the tuple via \Cref{eq:group_action} simplifies elegantly to the left-action on the single group element in $G$.

We now have the following properties for the learned representations (see \Cref{sec:app_proofs} for a proof):
\begin{restatable}[]{proposition}{inveqvrep}
\label{thm:inv-eqv-rep}
When $\operatorname{\textsc{enc}}$ and $\operatorname{\textsc{dec}}$ are given by \Cref{eq:encoder,eq:decoder}, and the group actions are specified as in \Cref{eq:action_pi} and \Cref{eq:group_action}, for any $g\in G$ and $f\in \mathcal{I}_G$, we have
\begin{align*} 
    [z_{\text{inv}}, g\cdot z_{\text{eq}}] &= \operatorname{\textsc{enc}}(\pi(g) f) \\ 
    \pi(g) \hat{f} &= \operatorname{\textsc{dec}}(z_{\text{inv}}, g\cdot z_{\text{eq}})
\end{align*}
\end{restatable}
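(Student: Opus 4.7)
The plan is to establish both equivariance statements by induction on the layer index, leveraging two building blocks: the $G_{l-1}$-equivariance of each $\operatorname{\textsc{g-cnn}}_{l-1}$ layer (a standard property of group-equivariant convolutions, so it commutes with the action $\pi$), and the equivariance of $S_b{\downarrow}_{G_l}^{G_{l-1}}$ and $S_u{\uparrow}_{G_l}^{G_{l-1}}$ from \Cref{thm:equivariance} with the actions $\pi'$ defined in \Cref{eq:group_action}. I would also rely on the multi-layer identification from \Cref{sec:app_multilayer} that lets one view the tuple $(p_1 G_1, \ldots, p_L G_L)$ as a single group element in $G$ on which the compounded action reduces to ordinary left multiplication.

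\textbf{Encoder induction.} For the encoder statement I set up the inductive hypothesis that, for each $l = 0, 1, \ldots, L$, feeding $\pi(g) f_0$ into the first $l$ encoder layers produces a state $[f_l', (p_1' G_1, \ldots, p_l' G_l)]$ related to the state $[f_l, (p_1 G_1, \ldots, p_l G_l)]$ obtained from $f_0$ through the iterated $\pi'$-action on the spaces $\mathcal{I}_{G_l} \times G_{l-1}/G_l$. The base case is trivial. For the inductive step I first push the transformation through $\operatorname{\textsc{g-cnn}}_{l-1}^E$ by its $G_{l-1}$-equivariance, then through $S_b{\downarrow}_{G_l}^{G_{l-1}}$ by \Cref{thm:equivariance}, producing the update described in \Cref{eq:group_action}: the new coset becomes $p_l' G_l = u_l p_l G_l$ for the appropriate $u_l \in G_{l-1}$, and the feature map picks up a conjugation factor $\pi(\bar{p}_l'^{-1} u_l \bar{p}_l)$ whose argument lies in $G_l$ and plays the role of the action element at layer $l+1$. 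Invoking the identification of \Cref{sec:app_multilayer}, the accumulated action on the coset tuple collapses to left multiplication by $g$ on the equivalent single element of $G$, yielding $g \cdot z_{\text{eq}}$. The invariance of $z_{\text{inv}} = f_L(e)$ then follows because $G_L = \{e\}$ forces the final conjugation factor to be the identity, which acts trivially on $\mathcal{I}_{\{e\}}$.

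\textbf{Decoder and main obstacle.} For the decoder I would run the analogous induction in reverse, starting from $(z_{\text{inv}}, g \cdot z_{\text{eq}})$ and descending from $l = L$ to $l = 1$. At each step the upsampling $S_u{\uparrow}_{G_l}^{G_{l-1}}$ is equivariant by \Cref{thm:equivariance} and $\operatorname{\textsc{g-cnn}}_{l-1}^D$ commutes with $\pi$, so the inductive claim that the layer-$l$ feature map is $\pi(\cdot)$ applied to what would be obtained from $(z_{\text{inv}}, z_{\text{eq}})$ propagates down to $\hat{f}' = \pi(g) \hat{f}$. The main obstacle is the multi-layer bookkeeping in the encoder: each subsampling layer introduces a conjugation $\bar{p}_l'^{-1} u_l \bar{p}_l$ that entangles the section $s$ with the input action, and it is not immediately obvious that these conjugations compose across $L$ layers into plain left multiplication by $g$. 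The crucial technical step is therefore the identification in \Cref{sec:app_multilayer} that absorbs these section-dependent conjugations into a single group element and converts the compounded $\pi'$-action into the clean left action stated in the proposition; once that is in hand, the remainder of the argument is a routine induction.
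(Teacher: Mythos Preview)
Your proposal is correct and follows essentially the same approach as the paper's own proof: both run an induction over the layers using the equivariance of the $\operatorname{\textsc{g-cnn}}$ blocks together with \Cref{thm:equivariance}, track the conjugation factors $\bar{p}_l'^{-1} u_l \bar{p}_l \in G_l$ across layers, observe that $G_L=\{e\}$ forces the final factor to be $e$ (giving invariance of $z_{\text{inv}}$ and the relation $\bar{p}'_1\cdots\bar{p}'_L = g\,\bar{p}_1\cdots\bar{p}_L$), and then invoke the bijection $\nu$ from \Cref{sec:app_multilayer} (i.e.\ \Cref{thm:conversion}) to rewrite this as $z'_{\text{eq}} = g\cdot z_{\text{eq}}$; the decoder is handled by the mirror argument.
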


\section{Related Work} \label{sec:related_work}

\paragraph{Group equivariant neural networks} 
The equivariant subsampling/upsampling that we propose deals with feature maps (functions) defined on the space of the group $G$ or its subgroups $K$, which transform under the \textit{regular representation} with the group action. Hence our equivariant subsampling/upsampling is compatible with \textit{lifting-based} group equivariant neural networks defined on discrete groups \citep{cohen2016group, hoogeboom2018hexaconv, romero2019co, romero2020attentive} that define a mapping between feature maps on $G$. We also discuss the extension of group equivariant subsampling to be compatible with those defined on continuous/Lie groups \citep{cohen2018spherical, esteves2018learning, finzi2020generalizing, bekkers2020b, hutchinsonlelanzaidi2020lietransformer} in Section \ref{sec:future_work}.
This is in contrast to group equivariant neural networks that do not use lifting and use \textit{irreducible representations}, defining mappings between feature maps on the input space $\mathbf{X}$. \citep{cohen2016steerable, worrall2017harmonic, thomas2018tensor, kondor2018clebsch, weiler2018learning, weiler20183d, weiler2019general, esteves2020spin, fuchs2020se}.

\paragraph{Coset pooling} In particular, \cite{cohen2016group} propose \textit{coset pooling}, which is also a method for equivariant subsampling. Here a feature map $f$ on $G$ is mapped onto a feature map $\Phi(f)$ on $G/K$ (as opposed to $K$, for our equivariant subsampling) as follows:
\begin{equation}
\Phi (f)(gK) = \textsc{pool}_{k \in K} f(gk) 
\end{equation}
such that the feature values on the coset $gK$ are pooled. For the 1D-translation case, where $G=\mathbb{Z}, K=c\mathbb{Z}$, this amounts to pooling over every $c$th pixel, which disrupts the locality of features as opposed to our equivariant subsampling that preserves locality, and hence is more suitable to use with convolutions for translation equivariance. See \Cref{fig:coset_pooling} for a visual comparison. As such, the $p4$-CNNs in \cite{cohen2016group} use standard max pooling with stride=2 rather than coset pooling for $\mathbb{Z}^2$, and coset-pooling is only used in the final layer to pool over feature maps across 90-degree rotations, achieving exact rotation equivariance but imperfect translation equivariance. In our work, we use translation equivariant subsampling in the earlier layers and rotation equivariant subsampling in the final layers to achieve exact roto-translation equivariance. 

\begin{figure}[!ht]
  \centering
    \includegraphics[width=0.65\linewidth]{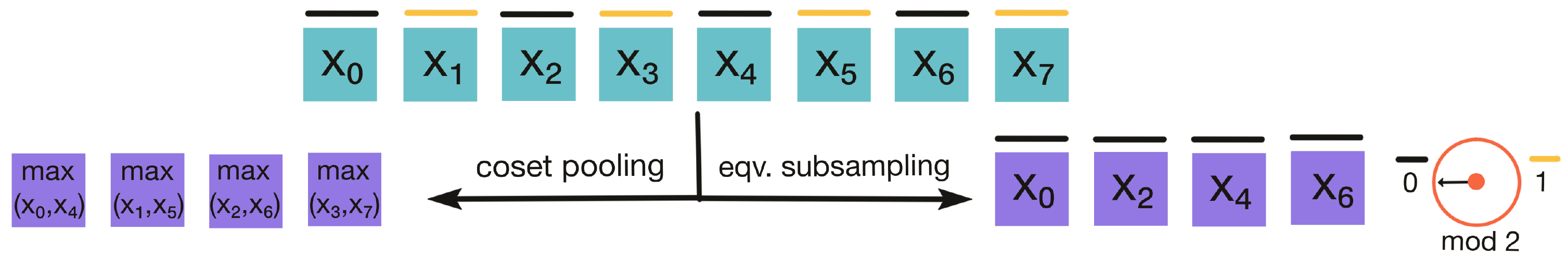}
    \caption{Coset (max) pooling vs. equivariant subsampling.}
    \label{fig:coset_pooling}
\end{figure}

\vspace*{-0.3cm}
\paragraph{Unsupervised disentangling and object discovery} GAEs produce equivariant ($z_{\text{eq}}$) and invariant ($z_{\text{inv}}$) representations, effectively separating position and pose information with other semantic information. This relates to unsupervised disentangling \citep{higgins2016beta,chen2018isolating,kim2018disentangling,zhao2017infovae} where different factors of variation in the data are separated in different dimensions of a low-dimensional representation. However unlike equivariant subsampling, there is no guarantee of any equivariance in the low-dimensional representation, making the resulting disentangled representations less interpretable.
Works on unsupervised object discovery \citep{burgess2019monet,greff2019multi,engelcke2020genesis,locatello2020object} learn object-centric representations, and we showcase GAEs in MONet \citep{burgess2019monet} where we replace their VAE with a V-GAE in order to separate position and pose information and learn more interpretable representations of objects in a data-efficient manner.

\paragraph{Shift-invariance in CNNs} As early as \cite{119725}, it has been discussed that shift-invariance cannot hold for conventional subsampling. Although standard subsampling operations such as pooling or strided convolutions are not \textit{exactly} shift invariant, they do not prevent strong performance on classification tasks \citep{10.1007/978-3-642-15825-4_10}. Nonetheless, \cite{zhang2019making} integrates anti-aliasing to improve shift-invariance, showing that it leads to better performance and generalisation on classification tasks. \cite{chaman2020truly} explore a similar strategy to our equivariant subsampling by partitioning feature maps into polyphase components and select the component with the highest norm. However, unlike the proposed group equivariant subsampling/upsampling which tackle general equivariance for arbitrary discrete groups, both works focus only on translation invariance. 

\section{Experiments} \label{sec:experiments}

\vspace*{-0.1cm}
In this section, we compare the performance of GAEs with equivariant subsampling to their non-equivariant counterparts that use standard subsampling/upsampling in object-centric representation learning. We show that GAEs give rise to more interpretable representations that show better sample complexity and generalisation than their non-equivariant counterparts.

\vspace*{-0.1cm}
\paragraph{Models and Baselines} (G-)Convolutional autoencoders (G)ConvAE are composed of alternating (G-)convolutional layers and subsampling/upsampling operations with a final MLPs applied to the flattened feature maps. We categorize models by the types of equivariance preserved by the convolutional layers. We consider three different discrete symmetry groups: $p1$ (only translations), $p4$ (composition of translations and $90$ degree rotations), $p4m$ (composition of translations, $90$ degree rotations and mirror reflection). The baseline models are: ConvAE-$p1$ (standard convolutional autoencoders), GConvAE-$p4$, GConvAE-$p4m$, where the corresponding equivariance is preserved in the (G-)convolutional layers but not in the subsampling/upsampling operations. The equivariant counterparts of these baseline models are GAE-$p1$, GAE-$p4$, GAE-$p4m$, where the subsampling/upsampling operations are also equivariant. For baseline models, we use a scale factor of $2$ for all subsampling/upsampling layers.
For GAEs, we subsample first the translations, then rotations, followed by reflections, all with scale factor 2. e.g. for GAE-$p4m$, the feature maps at each layer are defined on the following chain of nested subgroups:
$\mathbb{Z}^2 \rtimes (\mathsf{C}_4 \rtimes \mathsf{C}_2) \geq (2\mathbb{Z})^2 \rtimes (\mathsf{C}_4 \rtimes \mathsf{C}_2) \geq (4\mathbb{Z})^2 \rtimes (\mathsf{C}_4 \rtimes \mathsf{C}_2) \geq (8\mathbb{Z})^2 \rtimes (\mathsf{C}_4 \rtimes \mathsf{C}_2) \geq (16\mathbb{Z})^2 \rtimes (\mathsf{C}_2 \rtimes \mathsf{C}_2) \geq \{e\}$. 
As in \cite{cohen2016group}, we rescale the number of channels such that the total number of parameters of these models roughly match each other.

% datasets
\paragraph{Data} To demonstrate basic properties of GAEs and compare sample complexity under the single object scenario, we use Colored-dSprite \citep{dsprites17} and a modification of FashionMNIST \citep{xiao2017/online}, where we first apply zero-padding to reach a size of $64\times 64$, followed by random shifts, rotations and coloring. For multi-object datasets, we use Multi-dSprites \citep{multiobjectdatasets19} and CLEVR6 which is a variant of CLEVR \citep{johnson2017clevr} with up to $6$ objects. All input images are resized to a resolution of $64\times 64$. 

See \Cref{sec:app_implementation} and our reference implementation \footnote{\url{https://github.com/jinxu06/gsubsampling}} for more details on hyperparameters and data preprocessing. Our implementation is built upon open source projects \cite{harris2020array,NEURIPS2019_9015,Yadan2019Hydra,e2cnn,engelcke2020genesis,Hunter:2007,Waskom2021}. 

\subsection{Basic Properties: Equivariance, Disentanglement and Out-of-Distribution Generalization}

\paragraph{Equivariance} The encoder-decoder pipeline in GAEs is exactly equivariant. In \Cref{fig:equivariance_disentanglement}, we train GAE-$p4m$ on $6400$ examples from Colored-dSprites, and we show how to manipulate reconstructions by manipulating the equivariant representation $z_{\text{eq}}$ (left). If an image $x$ is encoded into $[z_{\text{inv}}, z_{\text{eq}}]$, then decoding $[z_{\text{inv}}, g\cdot z_{\text{eq}}]$ will give $g\cdot \hat{x}$ where $\hat{x}$ is the reconstruction of $x$.

\begin{figure}[t]
     \centering
     \begin{subfigure}[b]{0.4\textwidth}
         \centering
         \includegraphics[width=\textwidth]{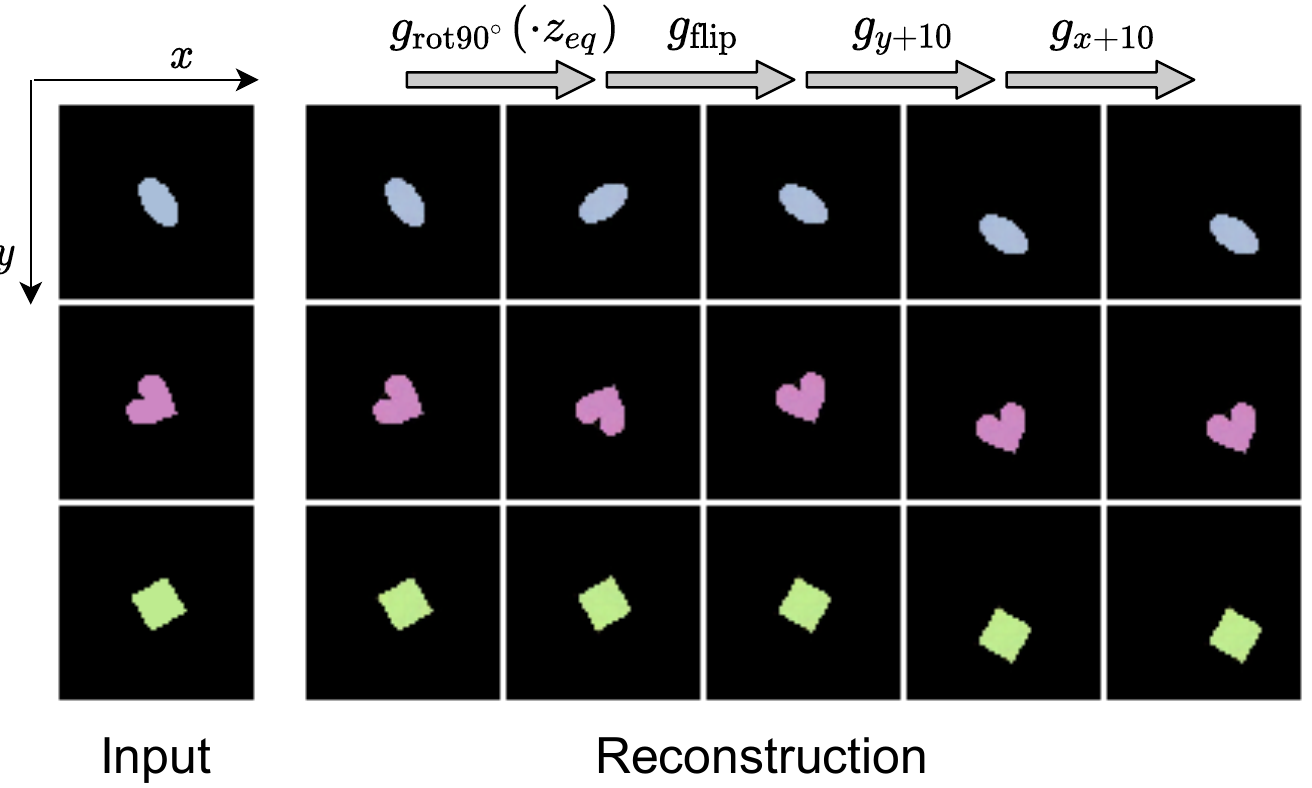}
     \end{subfigure}
     \hfill
     \begin{subfigure}[b]{0.5\textwidth}
         \centering
         \includegraphics[width=\textwidth]{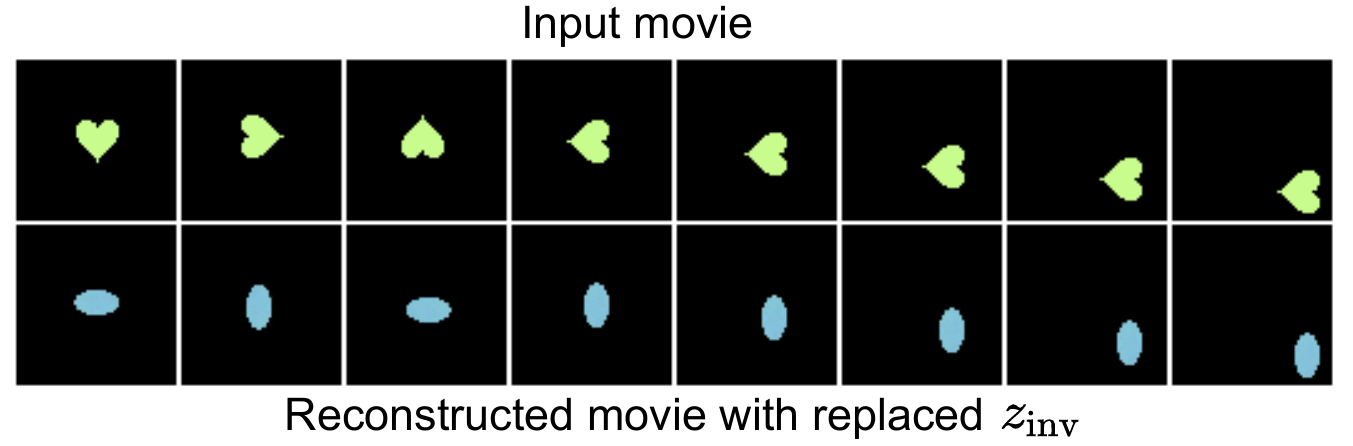}
     \end{subfigure}
        \caption{(Left) Manipulating reconstructions by modifying the equivariant part $z_{\text{eq}}$. The second column are the original reconstructions, which match the inputs well. The subsequent columns are reconstructions decoded from modified $z_{\text{eq}}$. We transform $z_{\text{eq}}$ with a sequence of group elements, and show the resulting reconstructions. (Right) Manipulating reconstruction shape by modifying $z_{\text{inv}}$.
        \vspace{-6pt}}
        \label{fig:equivariance_disentanglement}
\end{figure}

\paragraph{Disentanglement} The learned representations in GAEs are disentangled into an invariant part $z_{\text{inv}}$ and an equivariant part $z_{\text{eq}}$. In \Cref{fig:equivariance_disentanglement} (left), we vary the equivariant part while the invariant part remains the same. In \Cref{fig:equivariance_disentanglement} (right), we show the frames of a movie of a heart, and show its reconstruction after replacing $z_{\text{inv}}$ representing a heart with that of an ellipse. Note that the ellipse shape undergoes the same sequence of transformations as the heart.

\paragraph{Out-of-distribution generalisation} GAEs can generalise to data with unseen object locations and poses. We train an GAE-$p4$ on $6400$ constrained training examples, where we only use examples with locations in the top-left quarter and orientations within $[0, 90]$ degrees, as shown in~\Cref{fig:ood}. During test time, we evaluate mean squared error (MSE) of reconstructions on unfiltered test data to see how models generalise to unseen location and poses. 
Both ConvAE-$p1$ and GConvAE-$p4$ cannot generalise well to object poses out of their training distribution. In contrast, GAE-$p1$ generalise to any locations without performance degradation but not to unseen orientations, while GAE-$p4$, which encodes both translation and rotation equivariance, generalises well to all locations and orientations. We only use heart shapes for evaluation, because the square and ellipse have inherent symmetries.

\begin{figure}[t]
  \centering
    \includegraphics[width=0.8\linewidth]{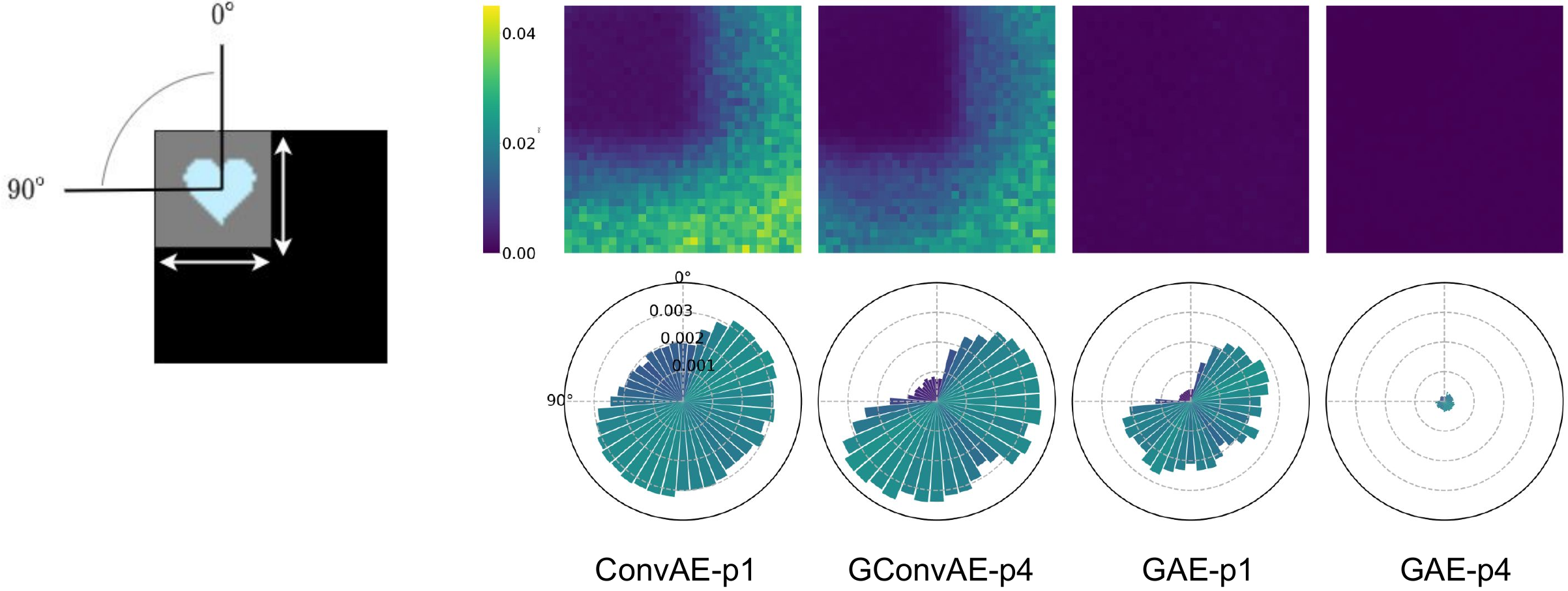}
    \caption{Generalisation to out-of-distribution object locations and poses. During training, we constrain shapes to be in the top-left quarter, and the orientation to be always less than $90$ degrees. On the right, we compare the error of reconstructions of different models generalise on objects at unseen locations in the first row, and how they generalise to unseen orientations in the second row.
    \vspace{-12pt}}
    \label{fig:ood}
\end{figure}

\begin{wrapfigure}[16]{r}{0.36\textwidth}
\vspace{-40pt}
    \includegraphics[width=0.36\textwidth]{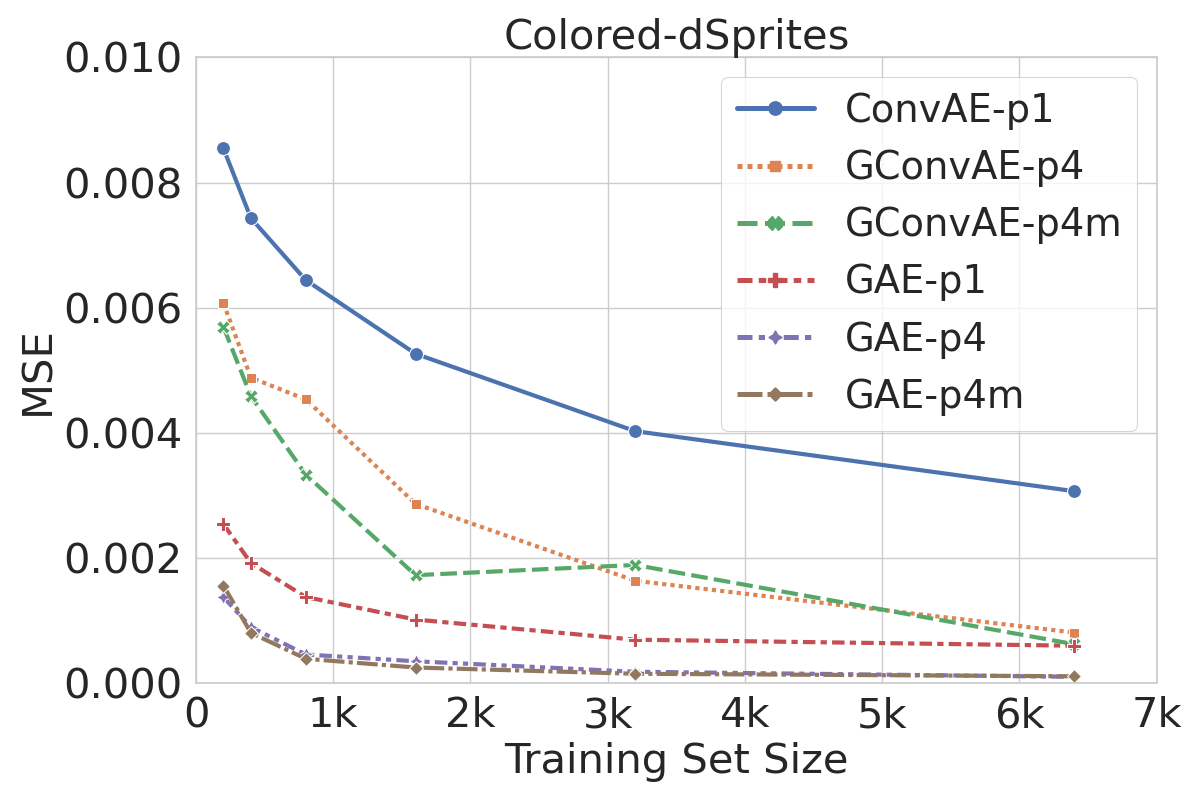}
    \includegraphics[width=0.36\textwidth]{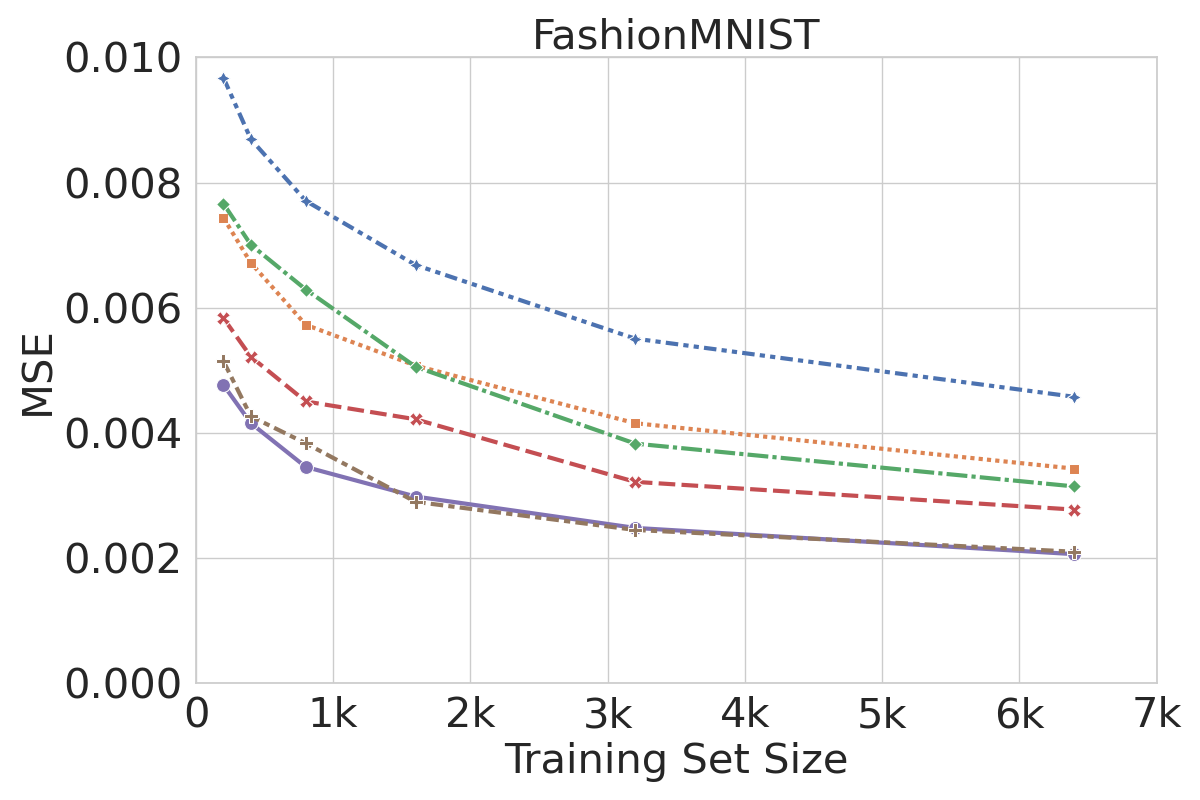}
    \caption{Reconstruction error on single object datasets} \label{fig:sample_complexity_single}
\end{wrapfigure}
\subsection{Single Object}
Since GAEs are fully equivariant and can generalize to unseen object poses, it is natural to conjecture that such models can significantly improve data efficiency when symmetry-transformed data points are also plausible samples from the data distribution. We test this hypothesis on Colored-dSprites and transformed FashionMNIST, and the results are shown in \Cref{fig:sample_complexity_single}. On both datasets, equivariant autoencoders significantly outperform their non-equivariant counterparts for all considered training set sizes. In fact, as shown in the figure, equivariant models trained with a smaller training set size is often comparable to baseline models trained on a larger training set. Furthermore, the results demonstrate that it is beneficial to consider symmetries beyond translations in these problems: for both non-equivariant and equivariant models, variants that encode rotation and reflection symmetries consistently show better performance compared to models that only consider the translation symmetry.

\subsection{Multiple Objects}

\looseness=-1
In multi-object scenes, it is often more interesting to consider local symmetries associated with objects rather than the global symmetry for the whole image. To exploit object symmetries in image data, one needs to first discover objects and separate them from the background, which is a challenging problem on its own. Currently, GAEs do not have inherent capability to solve these problems. In order to investigate whether our models could improve data efficiency in multi-object settings, we rely on recent work on unsupervised object discovery and only use GAEs to model object components. More specifically, we explored replacing component VAEs in MONet \citep{burgess2019monet} with V-GAEs (probabilistic version of our GAEs, where a standard Gaussian prior is put on $z_{\text{inv}}$ and $z_{\text{eq}}$ remains deterministic), and train models end-to-end. Again we study the low data regime to show results on data efficiency.

\begin{table*}[t]  
  \centering
  \footnotesize
  \begin{threeparttable}[]
  \caption{Reconstruction error MSE ($\times 10^{-3}$) (mean(stddev) across $5$ seeds) on multi-object datasets}
  \tabcolsep=0.09cm 
  \begin{tabular}{l|ccc|ccc}
    \toprule 
    Dataset & \multicolumn{3}{c|}{Multi-dSprites} & \multicolumn{3}{c}{CLEVR6}  \\
    \toprule
    Training Set Size & 3200 & 6400 & 12800 & 3200 & 6400 & 12800 \\
    \toprule
    MONet & $2.661(0.382)$ & $1.385(0.235)$ & $0.326(0.076)$ & $0.673(0.059)$ & $0.562(0.057)$ & $0.546(0.056)$\tnote{1} \\
    MONet-GAE-$p1$ & $0.659(0.103)$ & $0.359(0.025)$ & $0.264(0.042)$ & $0.473(0.064)$ & $0.432(0.052)$ & $0.388(0.016)$ \\
    MONet-GAE-$p4$ & $0.563(0.195)$ & $0.317(0.060)$ & $0.231(0.067)$ & $0.461(0.025)$ & $0.414(0.022)$ & $0.413(0.018)$ \\
    \bottomrule
  \end{tabular}
  \vspace*{-0.1cm}
  \label{tb:multi_object_mse}
  \end{threeparttable}
\end{table*}
\begin{table*}[t]  
  \centering
  \footnotesize
  \begin{threeparttable}[]
  \caption{Foreground segmentation performance in terms of ARI (mean(stddev) across $5$ seeds)}
  \tabcolsep=0.09cm 
  \begin{tabular}{l|ccc|ccc}
    \toprule 
    Dataset & \multicolumn{3}{c|}{Multi-dSprites} & \multicolumn{3}{c}{CLEVR6}  \\
    \toprule
    Training Set Size & 3200 & 6400 & 12800 & 3200 & 6400 & 12800 \\
    \toprule
    MONet & $0.597(0.022)$ & $0.747(0.049)$ & $0.891(0.009)$ & $0.829(0.055)$ & $0.878(0.023)$ & $0.865(0.033)$\tnote{1} \\
    MONet-GAE-$p1$ & $0.762(0.049)$ & $0.823(0.042)$ & $0.889(0.013)$ & $0.921(0.015)$ & $0.917(0.032)$ & $0.920(0.025)$ \\
    MONet-GAE-$p4$ & $0.753(0.089)$ & $0.833(0.072)$ & $0.902(0.025)$ & $0.878(0.055)$ & $0.914(0.012)$ & $0.910(0.011)$ \\
    \bottomrule
  \end{tabular}
  \label{tb:multi_object_ari}
  \begin{tablenotes}
        \item[1] \textit{We excluded $2$ outliers here as the baseline MONet occasionally fails during late-phase training.}
  \end{tablenotes}
  \vspace*{-0.3cm}
  \end{threeparttable}
\end{table*}

\looseness=-1
We train models on Multi-dSprites and CLEVR6 with training set sizes $3200$, $6400$ and $12800$. We consider two evaluation metrics: mean squared error (MSE) to measure the overall reconstruction quality, and adjusted rand index (ARI), which is a clustering similarity measure ranging from $0$ (random) to $1$ (perfect) to measure object segmentation. As in \cite{burgess2019monet}, we only use foreground pixels to compute ARI. Component VAEs in MONet use spatial broadcast decoders \citep{watters2019spatial} that broadcast the latent representation to a full scale feature map before feeding them into the decoders, and the decoders therefore do not need upsampling. It has the implicit effect of encouraging the smoothness of the decoder outputs. To encourage similar behaviour, we add average pooling layers with stride $1$ and kernel size $3$ to our equivariant decoders. As shown in \Cref{tb:multi_object_mse}, using GAEs to model object components significantly improves reconstruction quality, which is consistent with our findings in single-object scenario. As shown in \Cref{tb:multi_object_ari}, using GAEs to model object components also leads to better object discovery in the low data regimes, but this advantage seems to diminish as the dataset becomes sufficiently large.

\section{Conclusions, Limitations and Future Work}
\label{sec:future_work}

\paragraph{Conclusions} We have proposed subsampling/upsampling operations that \textit{exactly} preserve translation equivariance, and generalised them to define \textit{exact} group equivariant subsampling/upsampling for discrete groups. We have used these layers in GAEs that allow learning low-dimensional representations that can be used to reliably manipulate pose and position of objects, and further showed how GAEs can be used to improve data efficiency in multi-object representation learning models.

\paragraph{Limitations and Future work} 
Although the equivariance properties of subsampling layers also hold for Lie groups, we have not discussed the practical complexities that arise with the continuous case, where feature maps are only defined on a finite subset of the group rather than the whole group. We leave this as important future work, as well as application of equivariant subsampling for tasks other than representation learning where equivariance/invariance is desirable e.g. object classification, localization. Another limitation is that our work focuses on global equivariance, like most other works in the literature. An important direction is to extend to the case of local equivariances e.g. object-specific symmetries for multi-object scenes.

\begin{ack}

We would like to thank Adam R. Kosiorek for valuable discussion. We also thank Lewis Smith, Desi Ivanova, Sheheryar Zaidi, Neil Band, Fabian Fuchs, Ning Miao, and Matthew Willetts for providing feedback on earlier versions of the paper. JX gratefully acknowledges funding from Tencent AI Labs through the Oxford-Tencent Collaboration on Large Scale Machine Learning. 
\end{ack}

\medskip

{
\small
\bibliographystyle{apalike}  
\bibliography{gsubsampling}
}

\newpage 

\appendix

\section{Preliminaries} \label{sec:app_preliminaries}

\subsection{Group, Coset and Quotient Space}

A \emph{group} $G$ is a set of elements equipped with a binary operation (denoted as $\cdot$) that satisfies the following group axioms: 
\begin{enumerate}
    \item (Closure) For all $a, b\in G$, $a\cdot b \in G$.
    \item (Associative) For all $a,b,c \in G$, $(a \cdot b) \cdot c = a \cdot (b \cdot c)$.
    \item (Identity element) There exists an identity element $e$ in $G$ such that, for any $a \in G$ we have $e\cdot a = a \cdot e = a$.
    \item (Inverse element) For each $a\in G$, there exists an element $b\in G$ such that $a \cdot b = b \cdot a = e$ where $e$ is the identity element.
\end{enumerate}
The centered dot $\cdot$ can sometimes be omitted if there is no ambiguity.

In this work, we are mainly interested in symmetry groups where each group element is associated with a symmetry of a pattern, which is a transformation that leaves the pattern invariant. In symmetry groups, the binary operation corresponds to composition of transformations. 

A subset $H$ contained within $G$ is a \emph{subgroup} of $G$ if it forms a group on its own under the same binary operation.
Given a subgroup $H$ and an arbitrary group element $g\in G$, one can define \emph{left cosets} of $H$ as follows:
\begin{align*}
    gH &= \{g\cdot h \;|\; h\in H\} 
\end{align*}
The left cosets of $H$ form a partition of $G$ for any choice of $H$, i.e. the union of all cosets is $G$ and all cosets defined above are either identical or have empty interception. The set of all left cosets is called the \emph{quotient space} and is denoted as $G/H = \{gH \;|\; g \in G \}$.

As an example, all integers $\mathbb{Z}$ under addition forms a group and all multiples of $n$, denoted as $n \mathbb{Z}$ is a subgroup of $\mathbb{Z}$. For any integer $k \in \mathbb{Z}$, the set $n \mathbb{Z} + k$ containing all integers that has the remainder as $k$ divided by $n$, is a coset of $n \mathbb{Z}$. There are $n$ distinct cosets like this, and they form the quotient space $\mathbb{Z}/n \mathbb{Z}$.

\subsection{Group Homomorphism, Group Actions and Group Equivariance}

Given two groups $(G, \cdot_{G})$ and $(H, \cdot_{H})$, a \emph{group homomorphism} from $G$ to $H$ is function $f:G\rightarrow H$ such that for any $u, v \in G$
\begin{equation*}
    f(u \cdot_G v) = f(u) \cdot_H f(v).
\end{equation*}
It is a special mapping between two groups that is compatible with group structures. If $f$ is an one-to-one mapping, we call it a \emph{group isomorphism}. Two groups $G_1$ and $G_2$ are isomporphic if there is an isomorphism between them, and this is written as $G_1 \cong G_2$.

A \emph{group action} is a group homomorphism from a given group $G$ to the group of transformations on a space $\mathbf{X}$. We say the group $G$ acts on the space $\mathbf{X}$ and the transformation corresponding to $g\in G$ is a bijection on $\mathbf{X}$ that maps $x$ to $g\cdot x$.

If the group actions of $G$ on spaces $\mathbf{X}$ and $\mathbf{Y}$ are both defined, a function $f:\mathbf{X}\rightarrow \mathbf{Y}$ is said to be \emph{group equivariant} if 
\begin{align*}
    g \cdot f(x) = f(g\cdot x)
\end{align*}

\subsection{Homogeneous Spaces and Lifting Feature Maps} \label{sec:app_lifting}

% homogeneous spaces
If the action of a group $G$ on the space $\mathbf{X}$ is defined, and the action is transitive (i.e. $\forall x,x' \in \mathbf{X},\exists g\in G$, s.t.  $x' = g \cdot x$), we refer to $\mathbf{X}$ as being a homogeneous space for $G$. There is a natural one-to-one correspondence between the homogeneous space $\mathbf{X}$ and disjoint subsets of the group $G$. Given an arbitrary origin $x_0\in \mathbf{X}$, $H=\{g\in G| g\cdot x_0 = x_0\}$ is a subgroup of $G$, where $H$ is called the stabiliser of the origin. Because the group action on $\mathbf{X}$ is transitive, every element $x\in \mathbf{X}$ corresponds to a left coset in $s(x;x_0)H\in G/H$, where $s(x;x_0)$ is (any) group element that transforms $x_0$ to $x$. It can be shown that for $x,x'\in \mathbf{X}, x\neq x'$, $s(x;x_0)$ and $s(x';x_0)$ are disjoint.

% lifting
Because spatial data is often represented as functions on the homogeneous space $f_{\mathbf{X}}: x_i \mapsto f_i$, while lifting-based group equivariant neural networks operate on feature maps defined on the group, there is usually an operation called \textit{lifting}, that maps the data to the feature space of functions on the group, before applying equivariant modules. Using the correspondence between $\mathbf{X}$ and the quotient space $G/H$, we can map each pair $(x_i,f_i)$ to the set $\{(g,f_i) | g\in s(x_i;x_0)H \}$. It can be seen as lifting the input feature map $f_{\mathbf{X}}: x_i \mapsto f_i$ to the feature map $\operatorname{\textsc{lift}}(f_{\mathbf{X}}): g \mapsto f_i$ for $g \in s(x_i;x_0)H$. In this work, we assume all input feature maps have been lifted to feature maps on the group.

\subsection{Wallpaper Groups}

% notations
Wallpaper groups categorise symmetries of repetitive patterns on a 2D plane. For simplicity, we only considered $3$ different types of wallpaper symmetry groups $p1$, $p4$, and $p4m$ in this work following \cite{cohen2016group}. These groups are named using the crystallographic notation, where $p$ standards for primitive cells, the next digit indicates the highest order of rotational symmetries, and $m$ stands for mirror reflection. All symmetries contained in these groups can be deduced from their name:
\begin{itemize}
    \item $p1$: All 2D integer translations.
    \item $p4$: All compositions of 2D integer translations and rotations by a multiple of $90$ degrees.
    \item $p4m$: All compositions of elements in $p4$ and the mirror reflection.
\end{itemize}

% semi-direct product 
All three groups $p1$, $p4$, and $p4m$ can be constructed from basic additive groups of integers $\mathbb{Z}$ and cyclic groups $\mathsf{C}_n$ using the inner semi-direct product. Given a group $G$ with a normal subgroup $N$ (i.e. $\forall n\in N, g\in G$, $gng^{-1}\in N$), a subgroup $H$ (not necessarily a normal subgroup), if $G$ is the product of subgroups $G=NH=\{nh|n\in N, h\in H\}$, and $N \cap H=\{e\}$, we say $G$ is a inner semi-direct product of $N$ and $H$, written as $G = N \rtimes H$. Using semi-direct product, $p1$, $p4$, and $p4m$ can be expressed as:
\begin{align} \label{eq:decomposition}
    p1 &\cong \mathbb{Z}^2 \nonumber \\ 
    p4 &\cong \mathbb{Z}^2 \rtimes \mathsf{C}_4 \nonumber \\
    p4m &\cong \mathbb{Z}^2 \rtimes (\mathsf{C}_4 \rtimes \mathsf{C}_2)
\end{align}

% derive mul and inv
If $G\cong N \rtimes H$, the binary and inverse operations for $G$ can be determined from its subgroups $N$ and $H$. We represent group elements in $G$ as a tuple $(n, h)$ where $n\in N$ and $h\in H$. Let $\phi_h(n)=hnh^{-1}$, the binary operation on $G$ can be given by:
\begin{align*}
    (n_1,h_1) \cdot (n_2,h_2) = (n_1 \phi_{h_1}(n_2), h_1 h_2)
\end{align*}
and the inverse for element in $G$ can also be derived from the above:
\begin{align*}
    (n,h)^{-1} = (\phi_{h^{-1}}(n^{-1}), h^{-1})
\end{align*}
These properties can be used to simplify the implementation of the considered groups $p1$, $p4$, and $p4m$ following the decomposition in \Cref{eq:decomposition}, and the operations for basic groups $\mathbb{Z}$ and $\mathsf{C}_n$ are easy to implement.

\subsection{Feature Maps in G-CNNs} \label{subsec:app_feature maps}

A general mathematical framework is introduced in \cite{cohen2019general} to specify convolutional feature spaces used in G-CNNs, and feature maps are treated as fields over a homogeneous space. It covers most previous works on equivariant neural networks including \cite{cohen2016group,cohen2016steerable,s.2018spherical,weiler2019general}. Under this framework, one way to represent fields is through constrained functions defined on the whole symmetry group, also known as Mackey functions \citep{cohen2019general}.

Formally, let $G$ be a symmetry group, and $H\leq G$ together with $G$ determines the homogeneous space $G/H$. For a group representation $(\rho, V)$ of $H$, the action of the whole group $G$ on fields can be described by an induced representation $\pi =\text{Ind}_{H}^{G} \rho$, whose realisation depends on how we represent these fields. Below we specify the feature space $\mathcal{I}_M$ \footnote{$\mathcal{I}_M$ corresponds to $\mathcal{I}_G$ in \cite{cohen2019general}} for the Mackey function field representation discussed in \citep{cohen2018intertwiners,cohen2019general}:
\begin{align} \label{eq:mackey_functions}
    \mathcal{I}_M &=\{f:G\to V| f(gh) = \rho(h^{-1}) f(g),\forall g\in G, h\in H \} 
\end{align}
which forms a vector space.
Moreover, when $\rho$ is a \emph{regular representation}, which is the implicit choice of \cite{gens2014deep,kanazawa2014locally,dieleman2015rotation,dieleman2016exploiting,cohen2016group,marcos2016learning}, fields can also be represented as unconstrained functions on $G$ and the feature space can be written as 
\begin{align*}
    \mathcal{I}_G=\{f:G\to V'\}
\end{align*}
with $V'$ being a different vector space from $V$. If $\rho$ is a regular representation.

Feature maps are represented as functions on $G$ in both $\mathcal{I}_M$ and $\mathcal{I}_G$, even though $\mathcal{I}_M$ have additional conditions given in \Cref{eq:mackey_functions}. Moreover, the induced representation $\pi=\text{Ind}_{H}^{G} \rho$ for them have the same form:
\begin{align*}
    [\pi(u)f](g) = f(u^{-1} g)
\end{align*}

\section{Equivariant Subsampling and Upsampling} \label{sec:app_subsampling}

\subsection{Constructing $\Phi$} \label{sec:app_phi}

In \Cref{subsec:phi}, we provide a simple construction of the equivariant map $\Phi:\mathcal{I}_G\rightarrow G/K$ which gives the sampling indexes. The construction is a valid one if the argmax is unique. In practice one can insert arbitrary equivariant layers to $f$ before and after we take the norm $\|\cdot\|_1$ to avoid a non-unique argmax (see \Cref{sec:app_implementation}). However, in theory, there could be cases that the argmax is always non-unique. We discuss this case below and provide a more complex construction for it.

One cannot avoid a non-unique argmax in \Cref{eq:phi} when the input feature map $f\in \mathcal{I}_G$ has inherent symmetries, i.e. there exists $u\in G,u\neq e$, such that $f = \pi(u)f$. Assuming there is a unique argmax $g^{\ast}$ such that $g^{\ast} = \arg\max_{g\in G} \|f(g)\|_1$, we would have:
\begin{align*}
    f(u \cdot g^{\ast})=f(g^{\ast})=\max_{g\in G} \|f(g)\|_1
\end{align*}
Therefore $u \cdot g^{\ast}$ is also a valid argmax, hence the argmax is not unique. For example, when $f$ is a feature map representing a center-aligned circle, we would have $f = \pi(u) f$, where $u\in \mathit{O}(2)$ is associated with an arbitrary rotation around the center. One cannot find a unique argmax $g^{\ast}$ for this example, because the feature map would take the same function values at $u\cdot  g^{\ast}$.

Under the circumstance described above, the argmax operation would return a set of elements where each one attains the function's largest values. We denote it as $S^{\ast}=\arg\max_{g\in G} \|f(g)\|_1$, where $S^{\ast}$ is a subset of $G$. To obtain the sampling index (a coset) $pK$, we sample uniformly from the set $S^{\ast}$, and let $\Phi$ outputs $p K$ where $p \sim S^{\ast}$. In this case, the map $\Phi$ is still equivariant in distribution even though it is now a stochastic map.

Note that it is possible to consider more sophisticated solutions or even use learnable modules for $\Phi$, which we leave for future work.

\subsection{Multiple Subsampling Layers} \label{sec:app_multilayer}

\paragraph{Translation equivariant subsampling}

We can stack convolutional and translation equivariant subsampling layers to construct exactly translation equivariant CNNs. Unlike standard CNNs, each translation equivariant subsampling layer with a scale factor $c_k$ outputs a subsampling index $i_k$ in addition to the feature maps. Hence the equivariant representation output by the CNN with $L$ subsampling layers is a final feature map $f_L$ and a $L$-tuple of sampling indices $(i_1, . . . , i_L)$. 

In the multi-layer case, the $l$-th subsampling layer takes in a feature map $f$ on $\prod_{k=1}^{l-1} c_k \mathbb{Z}$ and outputs: 1) a feature map on $\prod_{k=1}^l c_k \mathbb{Z}$ and 2) a subsampling index $i_l \in \prod_{k=1}^{l-1} c_k \mathbb{Z} / \prod_{k=1}^l c_k \mathbb{Z} \cong \mathbb{Z} / c_l \mathbb{Z}$ given by:
\begin{align*}
    (\prod_{k=1}^{l-1} c_k) \cdot i_l = p_l = \Phi_c(f) = \bmod({\arg\max}_{x\in (\prod_{k=1}^{l-1} c_k) \mathbb{Z}} \|f(x)\|_1, \prod_{k=1}^l c_k) 
\end{align*}

This is equivalent to treating the input feature map $f$ as a feature map $f'$ defined on $\mathbb{Z}$ (i.e. mapping the support of $f$ from $\prod_{k=1}^{l-1} c_k \mathbb{Z}$ to $\mathbb{Z}$ via division by $\prod_{k=1}^{l-1} c_k$), and the subsampling layer outputting: 1) a feature map on $c_l \mathbb{Z}$ and 2) a subsampling index $i_l \in \mathbb{Z} / c_l \mathbb{Z}$ given by:
\begin{align*}
    i_l = \bmod({\arg\max}_{x\in \mathbb{Z}} \|f'(x)\|_1, c_l) 
\end{align*}

Hence the tuple $(i_1, . . . , i_L)$ that contains the sampling indices of all layers can be expressed equivalently as a single integer:
\begin{align*}
    r_{\text{eq}} = \sum_{l=1}^{L} p_l =  \sum_{l=1}^{L} (\prod_{k=1}^{l-1} c_k) \cdot i_l
\end{align*}
where $r_{\text{eq}} \in \mathbb{Z}/(\prod_{k=1}^L c_k) \mathbb{Z}$. Note that the conversion between $r_{\text{eq}}$ and $(i_1, . . . , i_L)$ can be seen as the conversion between \emph{mixed radix notation} and decimal notation. Mixed radix notation is a mixed base numeral system where the numerical base varies from position to position, as opposed to base-n systems that have the same base for all positions\footnote{A commonly used example of mixed radix notation is to express time, where e.g. 12:34:56 has a base
of 24 for the hour digit, base 60 for the minute digit and base 60 for the second digit.}.
Thus there is an one-to-one correspondence between the two. Moreover, when the input feature map is translated to the right by $t \in \mathbb{Z}$, $r_{\text{eq}}$ would become $\bmod(r_{\text{eq}}+t, \prod_{k=1}^L c_k)$. See the statement of this result for the general group case in \Cref{thm:conversion} and its proof in \Cref{sec:conversion_proof}.

\paragraph{Group equivariant subsampling}

Similarly, given an input feature map $f\in \mathcal{I}_G$, we can construct CNNs/G-CNNs with multiple equivariant subsampling layers by specifying a sequence of nested subgroups $G=G_0 \geq G_1 \geq \dots \geq G_L$. The $l$-th subsampling layer takes in a feature map on $G_{l-1}$, outputs a feature map on $G_l$ and a sampling index $p_l G_l \in G_{l-1}/G_{l}$. Formally, the $l$-th subsampling layer can be written as:
\begin{align*}
    S_b{\downarrow}_{G_l}^{G_{l-1}}: \mathcal{I}_{G_{l-1}} \rightarrow \mathcal{I}_{G_l} \times G_{l-1}/G_{l}
\end{align*}
The equivariant representation output by the CNNs/G-CNNs with $L$ subsampling layers is a feature map in $f_L \in G_L$ and a $L$-tuple $(p_1 G_1,p_2 G_2,\dots,p_L G_L)$.

Similar to the 1D translation case, the sampling index tuple $(p_1 G_1,p_2 G_2,\dots,p_L G_L)$ can be expressed equivalently as a single element in the quotient space $G/G_L$:
\begin{align} \label{eq:nu}
    r_{\text{eq}} = (\bar{p}_1 \bar{p}_2 \dots \bar{p}_L) G_L = \nu(p_1 G_1,p_2 G_2,\dots,p_L G_L)
\end{align}
where $\bar{p}_l$ denote the coset representive for the quotient space $G_{l-1}/G_{l}$.
$\nu$ is a bijection from $r_{\text{eq}}$ to the tuple, whose inverse can be computed by the following recursive procedure:
\begin{align} \label{eq:recursive}
    p'_1 G_L &= r_{\text{eq}} \nonumber \\
    p'_l &= \bar{p}_{l-1}^{\prime-1} \cdot  p'_{l-1} \nonumber \\
    (p'_1 G_1,p'_2 G_2,\dots,p'_L G_L) &= \nu^{-1}(r_{\text{eq}})
\end{align}

\begin{restatable}[]{proposition}{conversion}
\label{thm:conversion}
$\nu^{-1}$ is the inverse of $\nu$, hence $\nu$ is bijective. And $ \forall u\in G$ we have:
\begin{align*}
    u\cdot \nu(p_1 G_1,p_2 G_2,\dots,p_L G_L) = \nu(u \cdot (p_1 G_1,p_2 G_2,\dots,p_L G_L)).
\end{align*}
\end{restatable}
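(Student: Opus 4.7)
The plan is to split the proposition into its two claims and handle them sequentially, relying throughout on the fact that each coset representative $\bar{p}_k = s_k(p_k G_k)$ lies in $G_{k-1}$, so that the nested chain forces $\bar{p}_{l+1}\bar{p}_{l+2}\cdots\bar{p}_L \in G_l$ whenever we multiply out trailing terms.

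For the bijectivity claim, I would first establish $\nu^{-1}\circ\nu = \mathrm{id}$. Given a tuple $(p_1 G_1,\dots,p_L G_L)$, $\nu$ outputs $r_{\mathrm{eq}} = \bar{p}_1\bar{p}_2\cdots\bar{p}_L G_L$. For any representative $p'_1$ of $r_{\mathrm{eq}}$ we can write $p'_1 = \bar{p}_1\bar{p}_2\cdots\bar{p}_L h$ for some $h\in G_L$. Since $\bar{p}_2\cdots\bar{p}_L h \in G_1$, we get $p'_1 G_1 = \bar{p}_1 G_1 = p_1 G_1$, hence $\bar{p}'_1 = \bar{p}_1$; the recursion then gives $p'_2 = \bar{p}_2\cdots\bar{p}_L h$ and likewise $p'_2 G_2 = p_2 G_2$. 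A straightforward induction on $l$ yields $p'_l G_l = p_l G_l$ for every $l$, and simultaneously shows that the output is independent of the choice of representative $p'_1$. For $\nu\circ\nu^{-1} = \mathrm{id}$, I would unwind the recursion as $p'_{l-1} = \bar{p}'_{l-1} p'_l$, giving $p'_1 = \bar{p}'_1\bar{p}'_2\cdots\bar{p}'_{L-1} p'_L$, and then use $p'_L G_L = \bar{p}'_L G_L$ to conclude $p'_1 G_L = \bar{p}'_1\cdots\bar{p}'_L G_L$; since $p'_1 G_L = r_{\mathrm{eq}}$ by construction, this is precisely $\nu$ of the recovered tuple.

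For the equivariance, I would unfold the action on the tuple that is implicit in \Cref{eq:group_action} applied layer by layer. Setting $u_0 := u$, the first subsampling layer produces new coset $u_0 p_1 G_1 = \bar{p}'_1 G_1$ and the internal conjugate $u_1 := \bar{p}_1^{\prime -1} u_0 \bar{p}_1 \in G_1$; inductively the $l$-th layer produces the new coset $u_{l-1} p_l G_l$ and $u_l := \bar{p}_l^{\prime -1} u_{l-1} \bar{p}_l \in G_l$. The crucial relation $u_{l-1}\bar{p}_l = \bar{p}'_l u_l$ then telescopes into
\begin{align*}
\bar{p}'_1 \bar{p}'_2 \cdots \bar{p}'_L
&= (u_0 \bar{p}_1 u_1^{-1})(u_1 \bar{p}_2 u_2^{-1}) \cdots (u_{L-1} \bar{p}_L u_L^{-1}) \\
&= u\,\bar{p}_1\bar{p}_2\cdots\bar{p}_L\, u_L^{-1}.
\end{align*}
Since $u_L \in G_L$, passing to $G_L$-cosets yields $\bar{p}'_1\cdots\bar{p}'_L G_L = u\,\bar{p}_1\cdots\bar{p}_L G_L$, which is exactly $\nu(u\cdot(p_1 G_1,\dots,p_L G_L)) = u\cdot\nu(p_1 G_1,\dots,p_L G_L)$.

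The hard part is not algebraic but book-keeping: verifying at each step that $u_l$ genuinely lands in $G_l$ (so that the next layer's action $u_l p_{l+1} G_{l+1} \in G_l/G_{l+1}$ is well-defined) and carefully composing the per-layer transformations from \Cref{eq:group_action} across all $L$ layers. Both facts follow from $\bar{p}'_l G_l = u_{l-1}\bar{p}_l G_l$ together with $G_{l+1} \leq G_l$, but stating the induction hypothesis precisely — that the residual conjugate $u_l$ is the action transmitted to the deeper layers — is where most of the effort goes before the telescoping identity finishes the argument.
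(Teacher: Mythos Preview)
Your proposal is correct and follows essentially the same approach as the paper: the bijectivity arguments are identical, and your recursive conjugates $u_l = \bar{p}_l^{\prime-1}u_{l-1}\bar{p}_l \in G_l$ are exactly the paper's nested containment $\bar{p}_l^{\prime-1}\cdots\bar{p}_1^{\prime-1}u\,\bar{p}_1\cdots\bar{p}_l \in G_l$. The telescoping product you use to finish the equivariance is a slightly more explicit repackaging of the paper's final step, but the substance is the same.
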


\section{Group Equivariant Autoencoders} \label{sec:app_autoencoders}

In \Cref{sec:app_multilayer} we discussed that we can stack multiple subsampling layers by specifying a sequence of nested groups $G=G_0\geq G_1 \geq \dots \geq G_L$, and the CNN/G-CNNs with $L$ subsampling layers would produce a feature map on $G_L$ and a tuple $z_{\text{eq}} = (p_1 G_1, p_2 G_2, \dots, p_L G_L)$. Furthermore, we know from \Cref{thm:conversion} that there is an one-to-one correspondence between the tuple representation $z_{\text{eq}}$ and the single group element representation $r_{\text{eq}} = \nu(z_{\text{eq}})\in G/G_L$. For group equivariant autoencoders, we specify a sequence of subgroups but with $G_L=\{e\}$. In this case, $r_{\text{eq}}$ would simply become a group element in $G$. And the group action simplifies to left-multiplying the corresponding group elements.

Although one can simply use the tuple output by the encoder to perform upsampling in the decoder (and hence use the same sequence of nested subgroups), this is not strictly necessary as one can use a different sequence of nested subgroups for the decoder and obtain the tuple using the decomposition procedure in \Cref{eq:recursive}. Moreover, for more efficient implementation of GAEs, one does not need to pass through $\Phi$ in \Cref{eq:phi} for every subsampling layer. It would suffice to obtain the tuple of subsampling indexes from the first subsampling layer using:
\begin{align} \label{eq:phi_1}
     (p_1 G_1, p_2 G_2, \dots, p_L G_L) = \nu^{-1}(\arg\max_{g\in G} \|f(g)\|_1)
\end{align}

\section{Proofs} \label{sec:app_proofs}

\subsection[]{Proof of Lemma~\ref{thm:gaction} (See page~\pageref{thm:gaction})}
\gaction*
\begin{proof}
Since $\bar{p}$ and $\overline{up}$ are coset representives for $p K$ and $(up) K$, we can let $p=\bar{p}k_p$, $up=\overline{up}k_{up}$, where $k_p,k_{up}\in K$. From \Cref{eq:group_action}, note that $\bar{p}'=\overline{up}=upk_{up}^{-1}$. Hence
\begin{align} \label{eq:p_bar_inv_up}
    \bar{p}'^{-1}u\bar{p} &= (upk_{up}^{-1})^{-1} u (pk_{p}^{-1}) \nonumber \\ 
    &= k_{up}p^{-1}u^{-1} u p k_{p}^{-1} \nonumber \\ 
    &= k_{up} k_{p}^{-1} \in K
\end{align}
Hence $\pi'(u)$ (as defined in \Cref{eq:group_action}) defines a transformation from the space $\mathcal{I}_K \times G/K$ to itself.

To prove $\pi'$ is a group action, we would like to show that for all $u, u'\in G$
\begin{align*} 
    \pi'(u') (\pi'(u) [f_b,\;p K]) = \pi'(u' u) [f_b,\;p K]
\end{align*}

Let $[f'_b,\;p' K] = \pi'(u) [f_b,\;p K]$ and $[f''_b,\;p'' K] = \pi'(u' u) [f_b,\;p K]$, by the definition of $\pi'$ in \Cref{eq:group_action}, we have
\begin{align*}
    p'' K &= ((u' u) p) K = u' (up K) = u' (p' K)
\end{align*}
and 
\begin{align*}
    f''_b &= \pi(\bar{p}''^{-1} (u' u) \bar{p}) f_b = \pi(\bar{p}''^{-1} u' \bar{p}') \pi(\bar{p}'^{-1} u \bar{p}) f_b = \pi(\bar{p}''^{-1} u' \bar{p}') f'_b.
\end{align*}
Hence 
\begin{align*} 
    [f''_b,\;p'' K] = \pi'(u') [f'_b,\;p' K] 
\end{align*}

It is easy to also check that 
\begin{align*}
    [f_b,\;p K] = \pi'(e) [f_b,\;p K]
\end{align*}
Therefore $\pi'$ defines a valid group action.
\end{proof}

\subsection[]{Proof of Proposition~\ref{thm:equivariance} (See page~\pageref{thm:equivariance})}
\equivariance*
\begin{proof}

We first define a \emph{restrict} operation on $f\in \mathcal{I}_G$ and an \emph{extend} operation on $f_1\in \mathcal{I}_K$:
\begin{align*}
    f {\downarrow}_K^G(k) &= f(k), \hspace{4mm} k \in K \\
    f_1{\uparrow}_K^G(g) &=  
    \begin{cases}
      f_1(g) & g \in K \\
      \bm{0} & g \notin K
    \end{cases} 
\end{align*}
where $f {\downarrow}_K^G\in \mathcal{I}_K$ and $f_1{\uparrow}_K^G \in \mathcal{I}_G$.

Recall that $s:G/K\rightarrow G$ is a function choosing a coset representive $\bar{p}$ for each coset $p K$. Using the restrict operation, the subsampling operation $S_b{\downarrow}_K^G(f;\Phi)$ in \Cref{eq:subsampling} can equivalently be described as:
\begin{align*}
    p K &= \Phi(f) \nonumber \\
    f_b &= [\pi(\bar{p}^{-1})f]{\downarrow}_K^G \nonumber \\
    [f_b,\;p K] &= S_b{\downarrow}_K^G(f; \Phi) 
\end{align*}
And the upsampling operation $S_u{\uparrow}_K^G$ can be rewritten using the extend operation as:
\begin{align*}
    f_u &= S_u{\uparrow}_K^G(f_1, p K) = \pi(\bar{p})(f_1{\uparrow}_K^G)
\end{align*}

For any $u\in G$ let $f' = \pi(u) f$ and $[f'_b,\;p' K] = \pi'(u) [f_b,\;p K]$ where $\pi$ and $\pi'$ are specified in \Cref{eq:action_pi} and \Cref{eq:group_action} respectively. Since $\Phi$ is equivariant, we have
\begin{align} \label{eq:eq3}
    \Phi(f') = \Phi(\pi(u) f) = u\cdot \Phi(f) = u\cdot p K = p' K
\end{align}

Recall that $\bar{p}'^{-1}u\bar{p}=k_{up} k_{p}^{-1}$ from \Cref{eq:p_bar_inv_up}. Hence $\bar{p}'^{-1}=k_{up} k_{p}^{-1} \bar{p}^{-1} u^{-1}$ and we have
\begin{align} \label{eq:eq4}
    [\pi(\bar{p}'^{-1})f']{\downarrow}_K^G &= [\pi(k_{up}k_p^{-1} \bar{p}^{-1} u^{-1})f']{\downarrow}_K^G \nonumber \\
    &= \pi(k_{up}k_p^{-1}) [\pi(\bar{p}^{-1}) \pi(u^{-1}) f']{\downarrow}_K^G \nonumber \\
    &= \pi(k_{up}k_p^{-1}) [\pi(\bar{p}^{-1}) f]{\downarrow}_K^G \nonumber \\
    &= \pi(k_{up}k_p^{-1}) f_b = f'_b
\end{align}

From \Cref{eq:eq3,eq:eq4}, $S_b{\downarrow}_K^G$ is equivariant, i,e.
\begin{align*}
    \pi'(u) S_b{\downarrow}_K^G(f; \Phi) = S_b{\downarrow}_K^G(\pi(u)f; \Phi) 
\end{align*}

For the upsampling operation, let $[f'_1, p'_1 K] = \pi'(u) [f_1, p_1 K]$ and $f'_u = \pi(u) f_u$. From \Cref{eq:group_action} we have $f'_1 = \pi(\bar{p}'^{-1}u\bar{p}) f_1$. Hence
\begin{align*}
    S_u{\uparrow}_K^G(f'_1, p' K) &= \pi(\bar{p}') f'_1{\uparrow}_H^G \\ 
    &= \pi(\bar{p}')[\pi(\bar{p}'^{-1}u\bar{p}) f_1]{\uparrow}_H^G \\
    &= \pi(\bar{p}') \pi(\bar{p}'^{-1}u\bar{p}) (f_1{\uparrow}_H^G) \\
    &= \pi(u\bar{p}) f_1 {\uparrow}_H^G = \pi(u) f_u = f'_u
\end{align*}
Therefore, $S_u{\uparrow}_K^G$ is equivariant, i.e.
\begin{align*}
    \pi(u) S_u{\uparrow}_K^G([f_1, p_1 K]) = S_u{\uparrow}_K^G(\pi'(u)[f_1, p_1 K])
\end{align*}
\end{proof}

\subsection[]{Proof of Proposition~\ref{thm:conversion} (See page~\pageref{thm:conversion})} \label{sec:conversion_proof}
\conversion*
\begin{proof}
Firstly, we prove that $\nu^{-1} \circ \nu$ is an identity map, i.e. $\nu^{-1} \circ \nu = \mathds{1}_z$.
Let $r_{\text{eq}} = \nu(p_1 G_1, p_2 G_2, \dots, p_L G_L) = (\bar{p}_1 \bar{p}_2 \dots \bar{p}_L) G_L$ and $(p'_1 G_1, p'_2 G_2, \dots, p'_L G_L) = \nu^{-1}(r_{\text{eq}})$. From \Cref{eq:recursive}, we know that $p'_1 G_L = r_{\text{eq}}$. Hence we can let $p'_1 = \bar{p}_1 \bar{p}_2 \dots \bar{p}_L g_L$ where $g_L\in G_L$. Since $(\bar{p}_2 \bar{p}_3 \dots \bar{p}_L) \in G_1$, for $l=1$ we have
\begin{align*}
    \bar{p}'_1 &= \bar{p}_1 \\ 
    p'_2 = \bar{p}_1^{\prime-1} \cdot  p'_1 &= \bar{p}_2 \bar{p}_3 \dots \bar{p}_L g_L 
\end{align*}
And recursively, for $l=1,\dots,L$ we would have
\begin{align*}
    \bar{p}'_l &= \bar{p}_l \\ 
    p'_{l+1} &= \bar{p}_{l+1} \dots \bar{p}_L g_L 
\end{align*}
Hence $(p_1 G_1,p_2 G_2,\dots,p_L G_L) = (p'_1 G_1,p'_2 G_2,\dots,p'_L G_L)$ and $\nu^{-1} \circ \nu = \mathds{1}_z$.

Secondly, we prove that $\nu \circ \nu^{-1}$ is also an identity map, i.e. $\nu \circ \nu^{-1} = \mathds{1}_r$.
Let $(p'_1 G_1, p'_2 G_2, \dots, p'_L G_L)=\nu^{-1}(r_{\text{eq}})$ and $r'_{\text{eq}}=\nu(p'_1 G_1, p'_2 G_2, \dots, p'_L G_L)$. From \Cref{eq:recursive}, we have $r_{\text{eq}} =p'_1 G_L$ and $p'_l=\bar{p}_{l-1}^{\prime-1} \cdot p_{l-1}$. Hence
\begin{align*}
    r_{\text{eq}} &= p'_1 G_L = \bar{p}'_1 p'_2 G_L = \cdots = \bar{p}'_1 \bar{p}'_2 \dots \bar{p}'_{L-1} p'_L G_L = \bar{p}'_1 \bar{p}'_2 \dots \bar{p}'_L G_L = r'_{\text{eq}}
\end{align*}
Therefore $\nu \circ \nu^{-1} = \mathds{1}_r$ and $\nu$ is bijective.

Lastly, we prove $\nu$ is equivariant. Let $(p'_1 G_1, p'_2 G_2, \dots, p'_L G_L) = u\cdot (p_1 G_1, p_2 G_2, \dots, p_L G_L)$ where the group action is implied by \Cref{eq:group_action}. From \Cref{eq:p_bar_inv_up}, we know that when $u\in G$, $\pi(\bar{p}_1^{\prime-1} u \bar{p}_1) \in G_1$. Recursively, we have
\begin{align} \label{eq:eq5}
    \bar{p}_l^{\prime-1}\dots \bar{p}_2^{\prime-1} \bar{p}_1^{\prime-1} u \bar{p}_1 \bar{p}_2 \dots \bar{p}_l \in G_l
\end{align}
for $l=1,\dots,L$. When $l=L$, from $\bar{p}_L^{\prime-1}\dots \bar{p}_2^{\prime-1} \bar{p}_1^{\prime-1} u \bar{p}_1 \bar{p}_2 \dots \bar{p}_l \in G_L$, we have
\begin{align*}
    \bar{p}'_1 \bar{p}'_2 \dots \bar{p}'_L G_L = u \cdot (\bar{p}_1 \bar{p}_2 \dots \bar{p}_L G_L)
\end{align*}
Hence $u\cdot \nu(p_1 G_1,p_2 G_2,\dots,p_L G_L) = \nu(u \cdot (p_1 G_1,p_2 G_2,\dots,p_L G_L))$. So that the group action given by \Cref{eq:group_action} is simplified to the left-action on the single group element.
\end{proof}

\subsection[]{Proof of Proposition~\ref{thm:inv-eqv-rep} (See page~\pageref{thm:inv-eqv-rep})}
\inveqvrep*
\begin{proof}
Let $f, f'\in \mathcal{I}_G$ be the input feature maps where $f'=\pi(g)f$. Let $[f_l,\;p_l G_l]$ and $[f'_l,\;p'_l G_l]$ be the feature maps and subsampling indexes output by the $l$-th subsampling layer for $f$ and $f'$ respectively. Let $[z_{\text{inv}},z_{\text{eq}}]=\operatorname{\textsc{enc}}(f)$ and $[z'_{\text{inv}},z'_{\text{eq}}]=\operatorname{\textsc{enc}}(f')$ and let $r_{\text{eq}}=\nu(z_{\text{eq}})$, $r'_{\text{eq}}=\nu(z'_{\text{eq}})$ where $\nu$ is given in \Cref{eq:nu}.

From \Cref{eq:group_action} and the equivariance of $\operatorname{\textsc{g-cnn}}_l(\cdot)$, we have
\begin{align*}
    f'_1 = \pi(\bar{p}_1^{\prime-1} g \bar{p}_1) f_1
\end{align*}
and recursively:
\begin{align} \label{eq:eq6}
    f'_l = \pi((\bar{p}'_1 \bar{p}'_2 \dots \bar{p}'_l)^{-1} g (\bar{p}_1 \bar{p}_2 \dots \bar{p}_l)) f_l
\end{align}
where $l=1,\dots,L$ and $(\bar{p}'_1 \bar{p}'_2 \dots \bar{p}'_l)^{-1} g (\bar{p}_1 \bar{p}_2 \dots \bar{p}_l) \in G_l$ (see \Cref{eq:eq5}).

Since $G_L=\{e\}$ when $l=L$, we have
\begin{align*}
    (\bar{p}'_1 \bar{p}'_2 \dots \bar{p}'_L)^{-1} g (\bar{p}_1 \bar{p}_2 \dots \bar{p}_L) = e
\end{align*}
Hence
\begin{align*}
    f'(e) &= f(e) \\
    (\bar{p}'_1 \bar{p}'_2 \dots \bar{p}'_L) &= g \cdot (\bar{p}_1 \bar{p}_2 \dots \bar{p}_L) 
\end{align*}
which can be rewritten as
\begin{align*}
    z'_{\text{inv}} &=z_{\text{inv}} \\
    r'_{\text{eq}} &= g \cdot r_{\text{eq}}
\end{align*}
From \Cref{thm:conversion} we have
\begin{align*}
    z'_{\text{eq}} = \nu^{-1}(r'_{\text{eq}}) = \nu^{-1}(u r_{\text{eq}}) = g \cdot \nu^{-1}(r_{\text{eq}}) = g \cdot z_{\text{eq}}
\end{align*}
Therefore,  for the encoders we have $[z_{\text{inv}}, g\cdot z_{\text{eq}}] = \operatorname{\textsc{enc}}(\pi(g) f)$.

For the decoders, let
\begin{align*}
    z'_{\text{eq}} &= (p'_1 G_1, p'_2 G_2, \dots, p'_L G_L) = g \cdot z_{\text{eq}}
\end{align*}

Since the feature map at the $l$-th subsampling layer is transformed according to \Cref{eq:eq6}, the sampling index is transformed accordingly:
\begin{align*}
    p'_l G_l = (\bar{p}_{l-1}^{\prime-1} \dots \bar{p}_2^{\prime-1} \bar{p}_1^{\prime-1} g \bar{p}_1 \bar{p}_2 \dots \bar{p}_{l-1}) p_l G_L
\end{align*}
fFrom the definition of equivariant upsampling in \Cref{eq:upsampling}, we have
\begin{align*}
    f'_{l-1} = \pi(\bar{p}_{l-1}^{\prime-1} \dots \bar{p}_2^{\prime-1} \bar{p}_1^{\prime-1} g \bar{p}_1 \bar{p}_2 \dots \bar{p}_{l-1}) f_{l-1}
\end{align*}
where $l=L,\dots,1$. When $l=1$, we have $f'_0 = \pi(g) f_0$ so that $\pi(g) \hat{f} = \operatorname{\textsc{dec}}(z_{\text{inv}}, g\cdot z_{\text{eq}})$.

\end{proof}

\section{Implementation Details} \label{sec:app_implementation}

In this section, we outline a few important implementation details, and leave other details to the reference code at \url{https://github.com/jinxu06/gsubsampling}.

\subsection{Data}

For Colored-dSprites and FashionMNIST datasets, we add colours to grayscale images from \cite{dsprites17} and \cite{xiao2017/online} by sampling  random scaling for each channel uniformly between $0.5$ and $1$ following \cite{pmlr-v97-locatello19a}. For FashionMNIST, we also apply zero-paddings to images to reach the size of $64\times 64$. We then translate the images with random displacements uniformly sampled from $\{(x,y)|-18\leq x,y \leq 18, x, y\in \mathbb{Z}\}$, and rotate the images with uniformly sampled degrees from $\{\frac{360\times k}{32}|k=0,\dots,32\}$.
We use the original Multi-dSprites dataset as provided in \cite{multiobjectdatasets19}. For CLEVR6, we crop images from the original CLEVR \citep{johnson2017clevr} at y-coordinates $(29, 221)$ bottom and top, and at x-coordinates $(64, 256)$ left and right as stated in \cite{burgess2019monet}. We then resize the images to $64\times 64$ so that we can use the same model for both multi-object datasets. We only use images with up to $6$ objects in CLEVR following \cite{greff2019multi}. For evaluation, we use a randomly sampled test set with $10000$ examples that has no overlap with training data for all datasets.
In \Cref{fig:datasets}, we show examples from different datasets.

\begin{figure}[t] 
     \begin{subfigure}[b]{0.5\textwidth}
         \centering
         \includegraphics[width=0.8\textwidth]{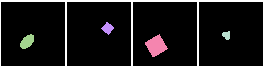}
    \caption{Colored-dSprites}
     \end{subfigure}
     \begin{subfigure}[b]{0.5\textwidth}
         \centering
         \includegraphics[width=0.8\textwidth]{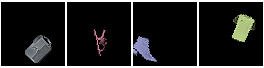}
     \caption{FashionMNIST}
     \end{subfigure}
     \begin{subfigure}[b]{0.5\textwidth}
         \centering
         \includegraphics[width=0.8\textwidth]{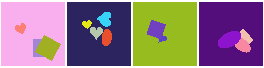}
     \caption{Multi-dSprites}
     \end{subfigure}
     \begin{subfigure}[b]{0.5\textwidth}
         \centering
         \includegraphics[width=0.8\textwidth]{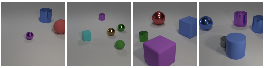}
     \caption{CLEVR6}
     \end{subfigure}
     \caption{}
     \label{fig:datasets}
\end{figure}

\subsection{Model Architectures} 

\paragraph{The equivariant map $\Phi$} One can insert arbitrary equivariant layers before and after we take the norm $\|\cdot\|_1$ in \Cref{eq:phi}. In experiments, we apply mean subtraction followed by average pooling with kernel size $5$ before taking the norm, and apply Gaussian blur with kernel size $15$ after taking the norm. They are inserted in the purpose of smoothing feature maps and avoiding non-unique argmax when possible (in \Cref{sec:app_phi} we discuss the case when non-unique argmax cannot be avoided). In practice, we use \Cref{eq:phi_1} to obtain all subsampling indexes at the same time rather than passing through $\Phi$ multiple times.

\paragraph{Autoencoders} 
For all single object experiments, we use $5$ layers of ($G$-)equivariant convolutional layers in encoders, and the decoders mirror the architecture of the encoders except for the output layers. In baseline models, we use strided convolution as a way to perform subsampling/upsampling, while in equivariant models we use equivariant downsampling/upsampling.
We rescale the number of channels such that the total number of parameters of the models roughly match one another. However, exact correspondence is not achievable because exact equivariant models use equivariant subsampling to transform feature maps into vectors at the final layer of the encoder, while baseline models apply flattening. Please see the reference implementation for other details about network architectures.

We use scale factor $2$ for all subsampling and upsampling layers in baseline models. For GAE-$p1$, the feature maps at each layer are defined on the following chain of nested subgroups:
$\mathbb{Z}^2 \geq (2\mathbb{Z})^2 \geq (4\mathbb{Z})^2  \geq (8\mathbb{Z})^2 \geq (16\mathbb{Z})^2 \geq \{e\}$. For For GAE-$p4$, we use $\mathbb{Z}^2 \rtimes \mathsf{C}_4 \geq (2\mathbb{Z})^2 \rtimes \mathsf{C}_4 \geq (4\mathbb{Z})^2 \rtimes \mathsf{C}_4 \geq (8\mathbb{Z})^2 \rtimes \mathsf{C}_4  \geq (16\mathbb{Z})^2 \rtimes \mathsf{C}_2 \geq \{e\}$. And for GAE-$p4m$, we use
$\mathbb{Z}^2 \rtimes (\mathsf{C}_4 \rtimes \mathsf{C}_2) \geq (2\mathbb{Z})^2 \rtimes (\mathsf{C}_4 \rtimes \mathsf{C}_2) \geq (4\mathbb{Z})^2 \rtimes (\mathsf{C}_4 \rtimes \mathsf{C}_2) \geq (8\mathbb{Z})^2 \rtimes (\mathsf{C}_4 \rtimes \mathsf{C}_2) \geq (16\mathbb{Z})^2 \rtimes (\mathsf{C}_2 \rtimes \mathsf{C}_2) \geq \{e\}$.

\paragraph{Object discovery} For baseline models, we adopt the exact same architecture as the original MONet \citep{burgess2019monet} using the implementation provided by \cite{engelcke2020genesis}. For MONet-GAEs, we simply replace Component VAEs in the original MONet with our V-GAEs. Both Component VAEs and V-GAEs have a latent size of $16$.

\subsection{Hyperparameters}

% hyperparameters
For all single object experiments, we use Adam optimizer \citep{DBLP:journals/corr/KingmaB14} with a learning rate of $0.0001$ and a batch size of $16$. We use $16$-bits precision to enable faster training and reduce memory consumption. For experiments on multi-object datasets, hyperparameters will match the original MONet \citep{burgess2019monet} except that we still use a batch size of $16$ instead of $64$ stated in the original paper. This is because we observed that in the low data regime, batch size $16$ trains faster and performs no worse than batch size $64$ for the problems we considered here.

\subsection{Computational Resources} \label{subsec:app_computes}

In theory, the only computational overhead is caused by computing sampling indices, which is negligible compared to the forward pass of ($G$-)Convolutional layers. In practice, our current implementation uses \texttt{torch.gather} to perform subsampling, and relies on for-loops over data batches when applying group actions to feature maps, which we believe can be made more efficient. Hence on a single GeForce GTX $1080$ GPU card, a standard GAE-p1 takes around $30$ minutes to train for $100k$ steps, compared to $16$ minutes for standard ConvAEs.

\end{document}